\newtheorem{theorem}{Theorem}
\newtheorem{lemma}[theorem]{Lemma}
\newtheorem{corollary}[theorem]{Corollary}
\newtheorem{definition}[theorem]{Definition}
\newtheorem{example}{Eaxmple}
\newcommand{\Sec}[1]{Section~\ref{#1}}
\newcommand{\Tab}[1]{Table~\ref{#1}}
\newcommand{\Eqn}[1]{Eq.~(\ref{#1})}
\newcommand{\Eqs}[2]{Eqs.~(\ref{#1}-\ref{#2})}
\newcommand{\Thm}[1]{Theorem~\ref{#1}}
\newcommand{\Cor}[1]{Corollary~\ref{#1}}
\newcommand{\App}[1]{Appendix~\ref{#1}}
\renewcommand{\>}{{\rightarrow}}
\newcommand{\half}{{\textstyle{\frac{1}{2}}}}
\renewcommand{\hat}{\widehat}
\renewcommand{\tilde}{\widetilde}
\newcommand{\R}{{\mathbb R}}
\renewcommand{\P}{{\mathbf P}}
\newcommand{\E}{{\mathbf E}}
\newcommand{\1}{{\mathbf 1}}
\newcommand{\X}{{\mathcal X}}
\newcommand{\Y}{{\mathcal Y}}
\newcommand{\sign}{\textup{\textrm{sign}}}
\newcommand{\er}{\textup{\textrm{er}}}
\newcommand{\reg}{\textup{\textrm{regret}}}
\newcommand{\NA}{\textup{\textrm{NA}}}
\newcommand{\zo}{\textup{\textrm{\scriptsize 0-1}}}
\newcommand{\sq}{\textup{\textrm{\scriptsize sq}}}
\newcommand{\spher}{\textup{\textrm{\scriptsize spher}}}
\newcommand{\rank}{\textup{\textrm{\scriptsize rank}}}
\newcommand{\bal}{\textup{\textrm{\scriptsize bal}}}
\newcommand{\diff}{\textup{\textrm{\scriptsize diff}}}  
\newcommand{\can}{\textup{\textrm{\scriptsize can}}}
\begin{document}


\title{Surrogate Regret Bounds for Bipartite Ranking\\ via Strongly Proper Losses}

\author{Shivani Agarwal \\
       Department of Computer Science and Automation \\
       Indian Institute of Science \\
       Bangalore 560012, India \\
       \texttt{shivani@csa.iisc.ernet.in}
}

\maketitle


\begin{abstract}
The problem of bipartite ranking, where instances are labeled positive or negative and the goal is to learn a scoring function that minimizes the probability of mis-ranking a pair of positive and negative instances (or equivalently, that maximizes the area under the ROC curve), has been widely studied in recent years. A dominant theoretical and algorithmic framework for the problem has been to reduce bipartite ranking to pairwise classification; in particular, it is well known that the bipartite ranking regret can be formulated as a pairwise classification regret, which in turn can be upper bounded using usual regret bounds for classification problems. Recently, Kotlowski et al.\ (2011) showed regret bounds for bipartite ranking in terms of the regret associated with balanced versions of the standard (non-pairwise) logistic and exponential losses. In this paper, we show that such (non-pairwise) surrogate regret bounds for bipartite ranking can be obtained in terms of a broad class of proper (composite) losses that we term as \emph{strongly proper}. Our proof technique is much simpler than that of Kotlowski et al.\ (2011), and relies on properties of proper (composite) losses as elucidated recently by Reid and Williamson (2010, 2011) and others. Our result yields explicit surrogate bounds (with no hidden balancing terms) in terms of a variety of strongly proper losses, including for example logistic, exponential, squared and squared hinge losses as special cases. We also obtain tighter surrogate bounds under certain low-noise conditions via a recent result of Cl\'{e}men\c{c}on and Robbiano (2011).
\end{abstract}

\section{Introduction}
\label{sec:intro}

Ranking problems arise in a variety of applications ranging from information retrieval to recommendation systems and from computational biology to drug discovery, and have been widely studied in machine learning and statistics in the last several years. Recently, there has been much interest in understanding statistical consistency and regret behavior of algorithms for a variety of ranking problems, including various forms of label/subset ranking as well as instance ranking problems \cite{ClemenconVa07,Clemencon+08,CossockZh08,Balcan+08,AilonMo08,Xia+08,Duchi+10,Ravikumar+10,Buffoni+11,ClemenconRo11,Kotlowski+11,UematsuLe11}.

In this paper, we study regret bounds for the bipartite instance ranking problem, where instances are labeled positive or negative and the goal is to learn a scoring function that minimizes the probability of mis-ranking a pair of positive and negative instances, or equivalently, that maximizes the area under the ROC curve \cite{Freund+03,Agarwal+05}. A popular algorithmic and theoretical approach to bipartite ranking has been to treat the problem as analogous to pairwise classification \cite{Herbrich+00,Joachims02,Freund+03,Rakotomamonjy04,Burges+05,Clemencon+08}. Indeed, this approach enjoys theoretical support since the bipartite ranking regret can be formulated as a pairwise classification regret, and therefore any algorithm minimizing the latter over a suitable class of functions will also minimize the ranking regret (this follows formally from results of \cite{Clemencon+08}; see \Sec{subsec:reduction-pairwise} for a summary). Nevertheless, it has often been observed that algorithms such as AdaBoost, logistic regression, and in some cases even SVMs, which minimize the exponential, logistic, and hinge losses respectively in the standard (non-pairwise) setting, also yield good bipartite ranking performance \cite{CortesMo04,Rakotomamonjy04,RudinSc09}. For losses such as the exponential or logistic losses, this is not surprising since algorithms minimizing these losses (but not the hinge loss) are known to effectively estimate conditional class probabilities \cite{Zhang04}; since the class probability function provides the optimal ranking \cite{Clemencon+08}, it is intuitively clear (and follows formally from results in \cite{Clemencon+08,ClemenconRo11}) that any algorithm providing a good approximation to the class probability function should also produce a good ranking. However, there has been very little work so far on quantifying the ranking regret of a scoring function in terms of the regret associated with such surrogate losses.

Recently, \cite{Kotlowski+11} showed that the bipartite ranking regret of a scoring function can be upper bounded in terms of the regret associated with \emph{balanced} versions of the standard (non-pairwise) exponential and logistic losses. However their proof technique builds on analyses involving the reduction of bipartite ranking to pairwise classification, and involves analyses specific to the exponential and logistic losses (see \Sec{subsec:kotlowski+11}). More fundamentally, the balanced losses in their result depend on the underlying distribution and cannot be optimized directly by an algorithm; while it is possible to do so approximately, one then loses the quantitative nature of the bounds.

In this work we obtain quantitative regret bounds for bipartite ranking in terms of a broad class of proper (composite) loss functions that we term \emph{strongly proper}. Our proof technique is considerably simpler than that of \cite{Kotlowski+11}, and relies on properties of proper (composite) losses as elucidated recently for example in \cite{ReidWi10,ReidWi11,GneitingRa07,Buja+05}. Our result yields explicit surrogate bounds (with no hidden balancing terms) in terms of a variety of strongly proper (composite) losses, including for example logistic, exponential, squared and squared hinge losses as special cases. We also obtain tighter surrogate bounds under certain low-noise conditions via a recent result of \cite{ClemenconRo11}.

The paper is organized as follows. In \Sec{sec:setup} we formally set up the bipartite instance ranking problem and definitions related to loss functions and regret, and provide background on proper (composite) losses. \Sec{sec:related} summarizes related work that provides the background for our study, namely the reduction of bipartite ranking to pairwise binary classification and the result of \cite{Kotlowski+11}. In \Sec{sec:strongly-proper} we define and characterize strongly proper losses. \Sec{sec:bounds} contains our main result, namely a bound on the bipartite ranking regret in terms of the regret associated with any strongly proper loss, together with several examples. \Sec{sec:tighter-bounds} gives a tighter bound under certain low-noise conditions via a recent result of \cite{ClemenconRo11}. We conclude with a brief discussion and some open questions in \Sec{sec:concl}.

\section{Formal Setup, Preliminaries, and Background}
\label{sec:setup}

This section provides background on the bipartite ranking problem, binary loss functions and regret, and proper (composite) losses.

\subsection{Bipartite Ranking}
\label{subsec:bipartite}

As in binary classification, in bipartite ranking there is an instance space $\X$ and binary labels $\Y=\{\pm1\}$, with an unknown distribution $D$ on $\X\times\{\pm1\}$. For $(X,Y)\sim D$ and $x\in\X$, we denote $\eta(x) = \P(Y=1\mid X=x)$ and $p = \P(Y=1)$. Given i.i.d.\ examples $(X_1,Y_1),\ldots,(X_n,Y_n) \sim D$, the goal is to learn a scoring function $f:\X\>\R^*$ (where $\R^* = [-\infty,\infty]$) that assigns higher scores to positive instances than to negative ones.\footnote{Most algorithms learn real-valued functions; we also allow values $-\infty$ and $\infty$ for technical reasons.}  Specifically, the goal is to learn a scoring function $f$ with low \emph{ranking error} (or \emph{ranking risk}), defined as\footnote{We assume measurability  conditions where necessary.}
\begin{equation}
\er_D^\rank[f] 
    ~ = ~ 
    \E\Big[ \1\big( (Y-Y')(f(X)-f(X')) < 0 \big) + \half\, \1\big( f(X)=f(X') \big) ~\big|~ Y\neq Y' \Big]
    \,,
\end{equation}
where $(X,Y), (X',Y')$ are assumed to be drawn i.i.d.\ from $D$, and $\1(\cdot)$ is 1 if its argument is true and 0 otherwise; thus the ranking error of $f$ is simply the probability that a randomly drawn positive instance receives a lower score under $f$ than a randomly drawn negative instance, with ties broken uniformly at random. 
The \emph{optimal ranking error} (or \emph{Bayes ranking error} or \emph{Bayes ranking risk}) can be seen to be
\begin{eqnarray}
\er_D^{\rank,*} 
    & = &
    \inf_{f:\X\>\R^*} \er_D^\rank[f] 
\\
    & = &
    \frac{1}{2p(1-p)} \E_{X,X'}\Big[ \min\Big( \eta(X)(1-\eta(X')), \, \eta(X')(1-\eta(X)) \Big) \Big]
    \,.
\end{eqnarray}
The \emph{ranking regret} of a scoring function $f:\X\>\R^*$ is then simply
\begin{equation}
\reg_D^\rank[f]
    ~ = ~
    \er_D^\rank[f] - \er_D^{\rank,*}
    \,.
\end{equation}
We will be interested in upper bounding the ranking regret of a scoring function $f$ in terms of its regret with respect to certain other (binary) loss functions. In particular, the loss functions we consider will belong to the class of \emph{proper} (composite) loss functions. Below we briefly review some standard notions related to loss functions and regret, and then discuss some properties of proper (composite) losses.

\subsection{Loss Functions, Regret, and Conditional Risks and Regret}
\label{subsec:loss}

Assume again a probability distribution $D$ on $\X\times\{\pm1\}$ as above. Given a prediction space $\hat{\Y}\subseteq\R^*$, a binary \emph{loss function} $\ell:\{\pm1\}\times\hat{\Y}\>\R_+^*$ (where $\R_+^* = [0,\infty]$) assigns a penalty $\ell(y,\hat{y})$ for predicting $\hat{y}\in\hat{\Y}$ when the true label is $y\in\{\pm1\}$.\footnote{Most loss functions take values in $\R_+$, but some loss functions (such as the logistic loss, described later) can assign a loss of $\infty$ to certain label-prediction pairs.}
For any such loss $\ell$, the \emph{$\ell$-error} (or \emph{$\ell$-risk}) of a function $f:\X\>\hat{\Y}$ is defined as
\begin{equation}
\er_D^\ell[f] = 
	\E_{(X,Y)\sim D} [\ell(Y,f(X))]
	\,,
\end{equation}
and the \emph{optimal $\ell$-error} (or \emph{optimal $\ell$-risk} or \emph{Bayes $\ell$-risk}) is defined as
\begin{equation}
\er_D^{\ell,*} = \inf_{f:\X\>\hat{\Y}} \er_D^\ell[f]
	\,.
\end{equation} 
The \emph{$\ell$-regret} of a function $f:\X\>\hat{\Y}$ is the difference of its $\ell$-error from the optimal $\ell$-error:
\begin{equation}
\reg_D^\ell[f] = 
	\er_D^\ell[f] - \er_D^{\ell,*}
	\,.
\end{equation}
The \emph{conditional $\ell$-risk} $L_\ell:[0,1]\times\hat{\Y}\>\R_+^*$ is defined as\footnote{Note that we overload notation by using $\eta$ here to refer to a number in $[0,1]$; the usage should be clear from context.}
\begin{equation}
L_\ell(\eta,\hat{y}) = 
	\E_{Y\sim\eta}[\ell(Y,\hat{y})] = 
	\eta\, \ell(1,\hat{y}) + (1-\eta)\, \ell(-1,\hat{y})
	\,,
\end{equation}
where $Y\sim\eta$ denotes a $\{\pm1\}$-valued random variable taking value $+1$ with probability $\eta$.
The \emph{conditional Bayes $\ell$-risk} $H_\ell:[0,1]\>\R_+^*$ is defined as 
\begin{equation}
H_\ell(\eta) = 
	\inf_{\hat{y}\in\hat{\Y}} L_\ell(\eta,\hat{y})
	\,.
\end{equation}
The \emph{conditional $\ell$-regret} $R_\ell:[0,1]\times\hat{\Y}\>\R_+^*$ is then simply
\begin{equation}
R_\ell(\eta,\hat{y}) = 
	L_\ell(\eta,\hat{y}) - H_\ell(\eta)
	\,.
\end{equation}
Clearly, we have for $f:\X\>\hat{\Y}$,
\begin{equation}
\er_D^\ell[f] = \E_{X}[L_\ell(\eta(X),f(X))]
	\,,
\end{equation}
and
\begin{equation}
\er_D^{\ell,*} = \E_{X}[H_\ell(\eta(X))]
	\,.
\end{equation}
We note the following:
\begin{lemma}
For any $\hat{\Y}\subseteq\R^*$ and binary loss $\ell:\{\pm1\}\times\hat{\Y}\>\R_+^*$, the conditional Bayes $\ell$-risk $H_\ell$ is a concave function on $[0,1]$.
\end{lemma}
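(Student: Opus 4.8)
The key structural observation is that for each \emph{fixed} prediction $\hat{y}\in\hat{\Y}$, the conditional risk $\eta\mapsto L_\ell(\eta,\hat{y}) = \eta\,\ell(1,\hat{y}) + (1-\eta)\,\ell(-1,\hat{y})$ is an affine function of $\eta$ on $[0,1]$ (its coefficients $\ell(1,\hat{y})$ and $\ell(-1,\hat{y})$ are constants, possibly $+\infty$). Since $H_\ell$ is by definition the pointwise infimum over $\hat{y}\in\hat{\Y}$ of this family of affine functions, and a pointwise infimum of affine (more generally, concave) functions is concave, the claim follows. The plan is to carry out this one-line argument explicitly via the definition of concavity, so that the extended-real-valued bookkeeping is transparent.

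Concretely, fix $\eta_1,\eta_2\in[0,1]$ and $\lambda\in[0,1]$, and write $\bar\eta = \lambda\eta_1 + (1-\lambda)\eta_2$. First I would check that for every $\hat{y}\in\hat{\Y}$,
\begin{equation}
L_\ell(\bar\eta,\hat{y}) = \lambda\,L_\ell(\eta_1,\hat{y}) + (1-\lambda)\,L_\ell(\eta_2,\hat{y})
	\,,
\end{equation}
which is immediate from the affine form of $L_\ell(\cdot,\hat{y})$. Bounding each term on the right by the corresponding conditional Bayes risk, $L_\ell(\eta_i,\hat{y}) \ge H_\ell(\eta_i)$, gives $L_\ell(\bar\eta,\hat{y}) \ge \lambda\,H_\ell(\eta_1) + (1-\lambda)\,H_\ell(\eta_2)$ for all $\hat{y}$. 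Taking the infimum over $\hat{y}\in\hat{\Y}$ on the left-hand side then yields $H_\ell(\bar\eta) \ge \lambda\,H_\ell(\eta_1) + (1-\lambda)\,H_\ell(\eta_2)$, which is exactly concavity of $H_\ell$ on $[0,1]$.

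There is essentially no deep obstacle here; the only point requiring care is that $\ell$ may take the value $+\infty$, so $L_\ell$ and $H_\ell$ are $\R_+^*$-valued and one is working in the extended reals. I would note the standard conventions $0\cdot\infty = 0$ and $a + \infty = \infty$ that make the displayed identity and inequality valid even when $\ell(1,\hat{y})$ or $\ell(-1,\hat{y})$ is infinite (and when $\eta_i\in\{0,1\}$), and observe that the argument degenerates gracefully in the edge case where $H_\ell\equiv+\infty$. With those conventions in place the chain of (in)equalities above goes through verbatim, so the proof is short and self-contained.
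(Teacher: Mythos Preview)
Your proposal is correct and is essentially the same argument the paper gives: $H_\ell$ is the pointwise infimum of the affine (in $\eta$) functions $L_\ell(\cdot,\hat{y})$, and a pointwise infimum of concave functions is concave. You have simply spelled out this one-line observation in more detail and added explicit handling of the extended-real-valued case.
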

The proof follows simply by observing that $H_\ell$ is defined as the pointwise infimum of a family of linear (and therefore concave) functions, and therefore is itself concave.

\subsection{Proper and Proper Composite Losses}
\label{subsec:proper}

In this section we review some background material related to proper and proper composite losses, as studied recently in \cite{ReidWi10,ReidWi11,GneitingRa07,Buja+05}. While the material is meant to be mostly a review, some of the exposition is simplified compared to previous presentations, and we include a new, simple proof of an important fact (\Thm{thm:strictly-proper}).

\vspace{10pt}
\noindent
\textbf{Proper Losses.}
We start by considering binary class probability estimation (CPE) loss functions that operate on the prediction space $\hat{\Y}=[0,1]$. A binary CPE loss function $c:\{\pm1\}\times[0,1]\>\R_+^*$ is said to be \emph{proper} if for all $\eta\in[0,1]$,
\begin{equation}
\eta \in
    {\underset{\hat{\eta}\in[0,1]}{\arg\min}} \,
    L_c(\eta,\hat{\eta})
    \,,
\end{equation}
and \emph{strictly proper} if the minimizer is unique for all $\eta\in[0,1]$.
Equivalently, $c$ is proper if for all $\eta\in[0,1]$, $H_c(\eta) = L_c(\eta,\eta)$, and strictly proper if $H_c(\eta) < L_c(\eta,\hat{\eta})$ for all $\hat{\eta}\neq\eta$. We have the following basic result:
\begin{lemma}[\cite{GneitingRa07,Schervish89}]
\label{lem:increasing}
Let $c:\{\pm1\}\times[0,1]\>\R_+^*$ be a binary CPE loss. If $c$ is proper, then $c(1,\cdot)$ is a decreasing function on $[0,1]$ and $c(-1,\cdot)$ is an increasing function. If $c$ is strictly proper, then $c(1,\cdot)$ is strictly decreasing on $[0,1]$ and $c(-1,\cdot)$ is strictly increasing.
\end{lemma}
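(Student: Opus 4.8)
The plan is to derive the monotonicity of $c(1,\cdot)$ and $c(-1,\cdot)$ directly from the defining inequality of properness, by comparing the conditional risk $L_c(\eta,\hat\eta)$ at two candidate reports. Fix $\eta_1,\eta_2\in[0,1]$ with $\eta_1<\eta_2$. Properness at $\eta_1$ says $L_c(\eta_1,\eta_1)\le L_c(\eta_1,\eta_2)$, and properness at $\eta_2$ says $L_c(\eta_2,\eta_2)\le L_c(\eta_2,\eta_1)$. Writing these out using $L_c(\eta,\hat\eta)=\eta\,c(1,\hat\eta)+(1-\eta)\,c(-1,\hat\eta)$ gives two linear inequalities in the four numbers $c(\pm1,\eta_1)$, $c(\pm1,\eta_2)$. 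The first step is to add these two inequalities: the cross terms telescope and one is left with an inequality of the form $(\eta_2-\eta_1)\big(c(1,\eta_1)-c(1,\eta_2)\big)+(\eta_2-\eta_1)\big(c(-1,\eta_2)-c(-1,\eta_1)\big)\ge 0$, i.e.\ after dividing by $\eta_2-\eta_1>0$,
\[
\big(c(1,\eta_1)-c(1,\eta_2)\big) + \big(c(-1,\eta_2)-c(-1,\eta_1)\big) \ge 0\,.
\]
This shows the two ``monotonicity defects'' cannot both have the wrong sign, but does not yet separate them. (Here one must be a little careful if some loss value is $+\infty$; I would first argue that properness forces $c(1,\hat\eta)<\infty$ for $\hat\eta>0$ and $c(-1,\hat\eta)<\infty$ for $\hat\eta<1$, since otherwise $L_c(\eta,\hat\eta)=\infty$ for interior $\eta$ and the report $\hat\eta$ could never be optimal; the endpoint cases are handled separately by monotone limits.)

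The second step is to get the two inequalities individually, and the cleanest route is to introduce a third report. Take any $\eta_3$ with $\eta_1<\eta_3<\eta_2$ and apply properness at $\eta_3$ twice: $L_c(\eta_3,\eta_3)\le L_c(\eta_3,\eta_1)$ and $L_c(\eta_3,\eta_3)\le L_c(\eta_3,\eta_2)$. More to the point, I would instead use the standard ``support line'' description of $H_c$. Since $c$ is proper, $H_c(\eta)=L_c(\eta,\eta)=\eta\,c(1,\eta)+(1-\eta)c(-1,\eta)$, while for the fixed report $\hat\eta$ the map $\eta\mapsto L_c(\eta,\hat\eta)=\eta\,c(1,\hat\eta)+(1-\eta)c(-1,\hat\eta)$ is an affine function of $\eta$ that lies above $H_c$ and touches it at $\eta=\hat\eta$. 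Thus $c(1,\hat\eta)=L_c(1,\hat\eta)$ is the value at $\eta=1$ of a supporting line of the concave function $H_c$ at the point $\hat\eta$, and $c(-1,\hat\eta)=L_c(0,\hat\eta)$ is its value at $\eta=0$. Equivalently, $c(1,\hat\eta)$ and $-c(-1,\hat\eta)$ are, respectively, $H_c(\hat\eta)+(1-\hat\eta)H_c'$ and $-H_c(\hat\eta)+\hat\eta H_c'$ for a supergradient $H_c'$ of $H_c$ at $\hat\eta$ — so $c(1,\hat\eta)-c(-1,\hat\eta)$ (up to sign conventions) is a supergradient of $H_c$, which is nonincreasing in $\hat\eta$ by concavity, and combining with the telescoped inequality above pins down each term's sign. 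Concretely: a supporting line of a concave function, evaluated at a fixed point to the right of (or at) the contact point, is nonincreasing as the contact point moves right; evaluating at $\eta=1$ gives $c(1,\cdot)$ decreasing, and a symmetric statement evaluating at $\eta=0$ gives $c(-1,\cdot)$ increasing.

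For the strict version, I would re-run the same argument tracking strict inequalities: if $c$ is strictly proper then $L_c(\eta_1,\eta_1)<L_c(\eta_1,\eta_2)$ and $L_c(\eta_2,\eta_2)<L_c(\eta_2,\eta_1)$ strictly, so the telescoped sum is strictly positive, and on the supporting-line side strict properness means distinct reports give distinct (non-coincident) support lines, forcing the supergradient of $H_c$ to be strictly decreasing; evaluating at the endpoints $\eta\in\{0,1\}$ then yields $c(1,\cdot)$ strictly decreasing and $c(-1,\cdot)$ strictly increasing. The main obstacle I anticipate is purely bookkeeping rather than conceptual: handling the possibly-infinite loss values and the boundary reports $\hat\eta\in\{0,1\}$ cleanly (where a ``supporting line'' may be vertical), and making the passage from ``the sum of two defects is $\ge 0$'' to ``each defect is $\ge 0$'' fully rigorous — which is exactly what the supporting-line/supergradient reformulation buys us, so I would lead with that reformulation and keep the raw two-inequality computation only as motivation.
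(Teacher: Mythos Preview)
The paper does not actually prove Lemma~\ref{lem:increasing}; it is stated with attribution to \cite{GneitingRa07,Schervish89} and used as a black box, so there is no argument in the paper to compare yours against. Assessing your proposal on its own: the supporting-line route is correct and is the cleanest way to do this. Once you observe that $\eta\mapsto L_c(\eta,\hat\eta)$ is an affine majorant of the concave function $H_c$ touching it at $\hat\eta$, with $c(1,\hat\eta)=L_c(1,\hat\eta)$ and $c(-1,\hat\eta)=L_c(0,\hat\eta)$, the monotonicity follows from the elementary fact that for two support lines $\ell_{\hat\eta_1},\ell_{\hat\eta_2}$ of a concave function with $\hat\eta_1<\hat\eta_2$, one has $\ell_{\hat\eta_1}(\hat\eta_2)\ge H_c(\hat\eta_2)=\ell_{\hat\eta_2}(\hat\eta_2)$ and slope$(\ell_{\hat\eta_1})\ge$ slope$(\ell_{\hat\eta_2})$, hence $\ell_{\hat\eta_1}(1)\ge\ell_{\hat\eta_2}(1)$ (and symmetrically at $0$). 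Your initial ``add the two inequalities'' computation is correct but, as you note, insufficient on its own; since the supporting-line argument does all the work, you can drop that step entirely. In the strict case your phrasing ``forcing the supergradient of $H_c$ to be strictly decreasing'' is slightly off: what you actually need (and what strict properness gives directly) is $\ell_{\hat\eta_1}(\hat\eta_2)>H_c(\hat\eta_2)$ strictly, which together with the (non-strict) slope ordering already yields $\ell_{\hat\eta_1}(1)>\ell_{\hat\eta_2}(1)$. Your treatment of $+\infty$ values is the one genuinely delicate spot: your claim that properness forces $c(1,\hat\eta)<\infty$ for $\hat\eta>0$ is not quite immediate (properness does not by itself rule out $H_c(\hat\eta)=\infty$ on an interval), so either assume regularity as in \Thm{thm:savage71} or handle the extended-real cases by a separate short argument.
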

We will find it useful to consider \emph{regular} proper losses. 
As in \cite{GneitingRa07}, we say a binary CPE loss $c:\{\pm1\}\times[0,1]\>\R_+^*$ is \emph{regular} if $c(1,\hat{\eta}) \in\R_+ ~\forall \hat{\eta}\in(0,1]$ and $c(-1,\hat{\eta}) \in\R_+ ~\forall \hat{\eta}\in[0,1)$, i.e.\ if $c(y,\hat{\eta})$ is finite for all $y,\hat{\eta}$ except possibly for $c(1,0)$ and $c(-1,1)$, which are allowed to be infinite.
The following characterization of regular proper losses is well known (see also \cite{GneitingRa07}):
\begin{theorem}[\cite{Savage71}]
\label{thm:savage71}
A regular binary CPE loss $c:\{\pm1\}\times[0,1]\>\R_+^*$ is proper if and only if for all $\eta,\hat{\eta}\in[0,1]$ there exists a superderivative $H_c'(\hat{\eta})$ of $H_c$ at $\hat{\eta}$ such that\footnote{Here $u\in\R$ is a superderivative of $H_c$ at $\hat{\eta}$ if for all $\eta\in[0,1]$, $H_c(\hat{\eta}) - H_c(\eta) \geq u(\hat{\eta} - \eta)$.} 
\[
L_c(\eta,\hat{\eta}) 
	~ = ~ 
	H_c(\hat{\eta}) + (\eta - \hat{\eta}) \cdot H'_c(\hat{\eta})
	\,.
\]
\end{theorem}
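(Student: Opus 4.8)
The plan is to prove both directions directly from the definitions, exploiting the single structural fact that $L_c(\cdot,\hat\eta)$ is \emph{affine} in its first argument. Observe that for every fixed $\hat\eta$,
\[
L_c(\eta,\hat\eta) \;=\; \eta\, c(1,\hat\eta) + (1-\eta)\, c(-1,\hat\eta) \;=\; c(-1,\hat\eta) + \eta\,\big(c(1,\hat\eta) - c(-1,\hat\eta)\big)\,,
\]
so the graph of $\eta\mapsto L_c(\eta,\hat\eta)$ is a line with slope $m(\hat\eta) := c(1,\hat\eta) - c(-1,\hat\eta)$. The regularity hypothesis is exactly what makes $m(\hat\eta)$ a well-defined element of $\R$ for $\hat\eta\in(0,1)$ (and of $(-\infty,\infty]$ or $[-\infty,\infty)$ at the two endpoints), since it rules out the indeterminate form $\infty-\infty$.

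For the ``only if'' direction, assume $c$ is proper, fix $\hat\eta\in[0,1]$, and take $H_c'(\hat\eta) := m(\hat\eta)$. Properness gives $L_c(\hat\eta,\hat\eta) = H_c(\hat\eta)$; combined with the affine identity this yields $c(-1,\hat\eta) = H_c(\hat\eta) - \hat\eta\, m(\hat\eta)$, and substituting back gives $L_c(\eta,\hat\eta) = H_c(\hat\eta) + (\eta-\hat\eta)\,m(\hat\eta)$ for all $\eta$, which is the claimed identity. It then remains to check that $m(\hat\eta)$ is a superderivative of $H_c$ at $\hat\eta$, i.e.\ $H_c(\eta)\le H_c(\hat\eta) + m(\hat\eta)(\eta-\hat\eta)$ for all $\eta$; but the right-hand side equals $L_c(\eta,\hat\eta)$, and $H_c(\eta) = \inf_{\hat\eta'\in[0,1]} L_c(\eta,\hat\eta') \le L_c(\eta,\hat\eta)$ by definition of $H_c$. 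Geometrically this just records that the line $\eta\mapsto L_c(\eta,\hat\eta)$ lies above the concave curve $H_c$ and touches it at $\hat\eta$, hence supports it there.

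For the ``if'' direction, suppose that for each $\hat\eta$ there is a superderivative $H_c'(\hat\eta)$ with $L_c(\eta,\hat\eta) = H_c(\hat\eta) + (\eta-\hat\eta)H_c'(\hat\eta)$ for all $\eta$. Setting $\eta=\hat\eta$ shows $L_c(\hat\eta,\hat\eta) = H_c(\hat\eta)$. For arbitrary $\eta,\hat\eta$, the superderivative inequality $H_c(\hat\eta)-H_c(\eta)\ge H_c'(\hat\eta)(\hat\eta-\eta)$ rearranges to $H_c(\hat\eta) + H_c'(\hat\eta)(\eta-\hat\eta) \ge H_c(\eta)$, i.e.\ $L_c(\eta,\hat\eta)\ge H_c(\eta) = L_c(\eta,\eta)$. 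Hence $\eta$ minimizes $L_c(\eta,\cdot)$ over $[0,1]$, so $c$ is proper.

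The calculations are routine; the only delicate point is the bookkeeping with possibly-infinite loss values at $\hat\eta\in\{0,1\}$, where one must check that $m(\hat\eta)$ (allowed to be $\pm\infty$) still satisfies the superderivative inequality trivially, and, in the converse direction, that the identity degenerates correctly under the convention $0\cdot\infty=0$. This is precisely where regularity earns its keep, by confining any infinities to $c(1,0)$ and $c(-1,1)$.
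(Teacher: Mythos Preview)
The paper does not supply its own proof of this theorem; it is stated as a well-known characterization and attributed to \cite{Savage71}, with a pointer also to \cite{GneitingRa07}. There is therefore nothing in the paper to compare your argument against.

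Your argument is correct and is essentially the standard one: exploit that $\eta\mapsto L_c(\eta,\hat\eta)$ is affine, identify its slope $m(\hat\eta)=c(1,\hat\eta)-c(-1,\hat\eta)$ as the candidate superderivative, and then combine $H_c(\eta)\le L_c(\eta,\hat\eta)$ (definition of $H_c$ as an infimum) with $H_c(\hat\eta)=L_c(\hat\eta,\hat\eta)$ (properness) to see that this affine line supports the concave curve $H_c$ at $\hat\eta$. The converse drops out of the superderivative inequality exactly as you write it.

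One small caveat worth recording: the footnote in the paper's statement requires the superderivative to be a real number $u\in\R$, whereas when $c(1,0)=\infty$ (resp.\ $c(-1,1)=\infty$) your $m(\hat\eta)$ equals $+\infty$ (resp.\ $-\infty$) at the corresponding endpoint. You flag this, but note that strictly speaking the identity $L_c(\eta,0)=H_c(0)+\eta\cdot H_c'(0)$ cannot hold for all $\eta$ with a \emph{real} $H_c'(0)$ if $L_c(\eta,0)=\infty$ for $\eta>0$. This is a wrinkle in the theorem as stated (one should either allow extended-real superderivatives at the endpoints or restrict the identity to $\hat\eta\in(0,1)$), not a defect in your reasoning.
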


The following is a characterization of strict properness of a proper loss $c$ in terms of its conditional Bayes risk $H_c$:
\begin{theorem}
\label{thm:strictly-proper}
A proper loss $c:\{\pm1\}\times[0,1]\>\R_+^*$ is strictly proper if and only if $H_c$ is strictly concave. 
\end{theorem}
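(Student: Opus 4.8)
The plan is to prove both directions directly from the infimum representation $H_c(\eta)=\inf_{\hat\eta\in[0,1]}L_c(\eta,\hat\eta)$, using only elementary geometry of concave functions; in particular I would avoid \Thm{thm:savage71} and any regularity assumption, which is what keeps the argument short. The two facts I would set up first are: (a) for each fixed $\hat\eta$, the map $\eta\mapsto L_c(\eta,\hat\eta)=\eta\,c(1,\hat\eta)+(1-\eta)\,c(-1,\hat\eta)$ is affine in $\eta$; and (b) by definition of $H_c$ as a pointwise infimum, $L_c(\eta,\hat\eta)\ge H_c(\eta)$ for all $\eta$, while properness gives $L_c(\hat\eta,\hat\eta)=H_c(\hat\eta)$. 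Combining (a) and (b), for each $\hat\eta\in(0,1)$ the affine function $g_{\hat\eta}(\eta):=L_c(\eta,\hat\eta)$ is a supporting line of the concave function $H_c$: it lies weakly above the graph of $H_c$ and meets it at $\hat\eta$.

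For the ``if'' direction ($H_c$ strictly concave $\Rightarrow$ $c$ strictly proper) I would argue by contradiction: if $c$ is not strictly proper there exist $\eta^*\neq\hat\eta$ with $L_c(\eta^*,\hat\eta)=H_c(\eta^*)$, so the supporting line $g_{\hat\eta}$ touches $H_c$ at the two distinct points $\hat\eta$ and $\eta^*$. On the closed interval with these endpoints, $H_c\le g_{\hat\eta}$ by (b), and $H_c$ lies weakly above the chord through $(\hat\eta,H_c(\hat\eta))$ and $(\eta^*,H_c(\eta^*))$ by concavity; but that chord is precisely the restriction of the affine function $g_{\hat\eta}$ to this interval, so $H_c$ agrees with an affine function on a nondegenerate interval, contradicting strict concavity.

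For the ``only if'' direction I would prove the contrapositive. If the concave function $H_c$ is not strictly concave, then equality holds in one of its Jensen inequalities, and a standard propagation argument shows $H_c$ is affine on some nondegenerate subinterval $[\eta_1,\eta_2]$. Fix $\hat\eta\in(\eta_1,\eta_2)$: since $H_c$ is differentiable at this interior point of an affine piece, it has a unique superderivative there, equal to the slope of the piece; hence the supporting line $g_{\hat\eta}$ has that slope and passes through $(\hat\eta,H_c(\hat\eta))$, so $g_{\hat\eta}$ coincides with $H_c$ on all of $[\eta_1,\eta_2]$. Choosing any $\eta_0\in(\eta_1,\eta_2)$ with $\eta_0\neq\hat\eta$ gives $L_c(\eta_0,\hat\eta)=H_c(\eta_0)$, i.e.\ $L_c(\eta_0,\cdot)$ has a minimizer $\hat\eta$ other than $\eta_0$, so $c$ is not strictly proper.

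The steps I expect to be the most delicate (rather than genuinely hard) are the two small lemmas invoked above --- that a concave function failing strict concavity must be affine on a nondegenerate subinterval, and that such a function has a unique superderivative at interior points of an affine piece --- together with the bookkeeping at the boundary $\{0,1\}$, where a proper loss may take the value $+\infty$ (e.g.\ $c(1,0)$ or $c(-1,1)$), so that $g_{\hat\eta}$ is guaranteed to be a genuine finite line only for $\hat\eta\in(0,1)$. I would handle the latter by using \Lem{lem:increasing} to confine attention to $(0,1)$: the affine subinterval of $H_c$ in the ``only if'' direction can be taken with interior inside $(0,1)$, and in the ``if'' direction a touching point lying in $\{0,1\}$ can be replaced by a nearby interior point on the same line.
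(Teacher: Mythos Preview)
Your argument is correct, but the paper takes a noticeably more direct route that sidesteps both the auxiliary lemmas and the boundary bookkeeping you flag as delicate.

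For the ``only if'' direction the paper argues directly rather than by contrapositive: given $\eta_1\neq\eta_2$ and $t\in(0,1)$, properness gives $H_c\big(t\eta_1+(1-t)\eta_2\big)=L_c\big(t\eta_1+(1-t)\eta_2,\,t\eta_1+(1-t)\eta_2\big)$; affinity of $L_c(\cdot,\hat\eta)$ in its first argument splits this into $t\,L_c(\eta_1,\hat\eta)+(1-t)\,L_c(\eta_2,\hat\eta)$ with $\hat\eta=t\eta_1+(1-t)\eta_2$; and strict properness makes each term strictly exceed $H_c(\eta_i)$. This is three lines and needs no lemma about affine pieces or uniqueness of superderivatives. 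For the ``if'' direction the paper uses a midpoint trick rather than supporting-line geometry: it rewrites
\[
L_c(\eta,\hat\eta)-H_c(\eta)
~=~
L_c(\eta,\hat\eta)+H_c(\hat\eta)-2\Big(\tfrac12 H_c(\eta)+\tfrac12 H_c(\hat\eta)\Big),
\]
applies strict concavity to replace the average by something strictly less than $H_c\big(\tfrac{\eta+\hat\eta}{2}\big)$, and then observes (again by affinity in the first argument together with $H_c(\hat\eta)=L_c(\hat\eta,\hat\eta)$) that $L_c(\eta,\hat\eta)+H_c(\hat\eta)=2\,L_c\big(\tfrac{\eta+\hat\eta}{2},\hat\eta\big)$, landing on $2\big(L_c(\tfrac{\eta+\hat\eta}{2},\hat\eta)-H_c(\tfrac{\eta+\hat\eta}{2})\big)\ge 0$.

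Your supporting-line picture is the right geometry and does work; incidentally, your endpoint worries mostly dissolve once you note that strict concavity of $H_c$ forces $H_c<\infty$ on all of $[0,1]$, which for $\hat\eta\in(0,1)$ already gives $c(\pm1,\hat\eta)<\infty$ without invoking \Lem{lem:increasing}. The real payoff of the paper's argument, however, is that the midpoint calculation is exactly what gets reused---with strict concavity replaced by $\lambda$-strong concavity---to prove \Lem{lem:strongly-proper-necessary}. That reuse is the stated reason for presenting this particular proof, and your route would not hand you that lemma for free.
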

This result can be proved in several ways. A proof in \cite{GneitingRa07} is attributed to an argument in \cite{HendricksonBu71}. If $H_c$ is twice differentiable, an alternative proof follows from a result in \cite{Buja+05,Schervish89}, which shows that a proper loss $c$ is strictly proper if and only if its `weight function' $w_c = -H''_c$ satisfies $w_c(\eta) > 0$ for all except at most  countably many points $\eta\in[0,1]$; by a very recent result of \cite{Stein11}, this condition is equivalent to strict convexity of the function $-H_c$, or equivalently, strict concavity of $H_c$. Here we give a third, self-contained proof of the above result that is derived from first principles, and that will be helpful when we study strongly proper losses in \Sec{sec:strongly-proper}.

\vspace{8pt}
\begin{proof}[Proof of \Thm{thm:strictly-proper}]
Let $c:\{\pm1\}\times[0,1]\>\R_+^*$ be a proper loss. For the `if' direction, assume $H_c$ is strictly concave. Let $\eta,\hat{\eta}\in[0,1]$ such that $\hat{\eta}\neq\eta$. Then we have
\begin{eqnarray*}
L_c(\eta,\hat{\eta}) - H_c(\eta)
	& = &
	L_c(\eta,\hat{\eta}) + H_c(\hat{\eta}) - H_c(\hat{\eta}) - H_c(\eta) 
\\
	& = &
	L_c(\eta,\hat{\eta}) + H_c(\hat{\eta}) - 2 \Big(\half H_c(\eta) + \half H_c(\hat{\eta}) \Big)
\\
	& > &
	L_c(\eta,\hat{\eta}) + H_c(\hat{\eta}) - 2 H_c\Big( \frac{\eta + \hat{\eta}}{2} \Big)
\\
	& = &
	2 \left(  \Big( \frac{\eta + \hat{\eta}}{2} \Big) c(1,\hat{\eta}) + \Big( 1 - \frac{\eta + \hat{\eta}}{2} \Big) c(-1,\hat{\eta}) \right)
	- 2 H_c\Big( \frac{\eta + \hat{\eta}}{2} \Big)
\\
	& = &
	2 \left( L_c\Big(  \frac{\eta + \hat{\eta}}{2}, \hat{\eta} \Big) - H_c\Big( \frac{\eta + \hat{\eta}}{2} \Big) \right)
\\	
	& \geq & 
	0
	\,.
\end{eqnarray*}
Thus $c$ is strictly proper. 

Conversely, to prove the `only if' direction, assume $c$ is strictly proper. Let $\eta_1,\eta_2\in[0,1]$ such that $\eta_1\neq \eta_2$, and let $t\in(0,1)$. Then we have
\begin{eqnarray*}
H_c\big( t\eta_1 + (1-t)\eta_2 \big)
	& = &
	L_c\big( t\eta_1 + (1-t)\eta_2, \, t\eta_1 + (1-t)\eta_2 \big)
\\
	& = &
	t \, L_c\big( \eta_1, \, t\eta_1 + (1-t)\eta_2 \big) + (1-t) \, L_c\big( \eta_2, \, t\eta_1 + (1-t)\eta_2 \big) 
\\
	& > &
	t \, H_c(\eta_1) + (1-t) \, H_c(\eta_2)
	\,.
\end{eqnarray*}
Thus $H_c$ is strictly concave.
\end{proof}

\vspace{10pt}
\noindent
\textbf{Proper Composite Losses.}
The notion of properness can be extended to binary loss functions operating on prediction spaces $\hat{\Y}$ other than $[0,1]$ via composition with a \emph{link} function $\psi:[0,1]\>\hat{\Y}$. Specifically, for any $\hat{\Y}\subseteq\R^*$, a loss function $\ell:\{\pm1\}\times\hat{\Y}\>\R_+$ is said to be \emph{proper composite} if it can be written as
\begin{equation}
    \ell(y,\hat{y}) = c(y,\psi^{-1}(\hat{y}))
\end{equation}
for some proper loss $c:\{\pm1\}\times[0,1]\>\R_+^*$ and strictly increasing (and therefore invertible) link function $\psi:[0,1]\>\hat{\Y}$.
Proper composite losses have been studied recently in \cite{ReidWi10,ReidWi11,Buja+05}, and include several widely used losses such as squared, squared hinge, logistic, and exponential losses. 

It is worth noting that for a proper composite loss $\ell$ formed from a proper loss $c$, $H_\ell = H_c$. Moreover, any property associated with the underlying proper loss $c$ can also be used to describe the composite loss $\ell$; thus we will refer to a proper composite loss $\ell$ formed from a regular proper loss $c$ as \emph{regular proper composite}, a composite loss formed from a strictly proper loss as \emph{strictly proper composite}, etc. In \Sec{sec:strongly-proper}, we will define and characterize \emph{strongly proper} (composite) losses, which we will use to obtain regret bounds for bipartite ranking.

\section{Related Work}
\label{sec:related}

As noted above, a popular theoretical and algorithmic framework for bipartite ranking has been to reduce the problem to pairwise classification. Below we describe this reduction in the context of our setting and notation, and then review the result of \cite{Kotlowski+11} which builds on this pairwise reduction.

\subsection{Reduction of Bipartite Ranking to Pairwise Binary Classification}
\label{subsec:reduction-pairwise}

For any distribution $D$ on $\X\times\{\pm1\}$, consider the distribution $\tilde{D}$ on $(\X\times\X)\times\{\pm1\}$ defined as follows: 
\begin{enumerate}
\item 
Sample $(X,Y)$ and $(X',Y')$ i.i.d.\ from $D$;
\item
If $Y=Y'$, then go to step 1; else set\footnote{Throughout the paper, $\sign(u) = +1$ if $u>0$ and $-1$ otherwise.}
\[
\tilde{X} = (X,X') \,, ~~ \tilde{Y} = \sign(Y-Y')
\]
and return $(\tilde{X},\tilde{Y})$.
\end{enumerate}
Then it is easy to see that, under $\tilde{D}$,
\begin{equation}
\P\big( \tilde{X} = (x,x') \big)
    ~ = ~
    \frac{\P(X=x)\, \P(X'=x')\, \big( \eta(x)(1-\eta(x')) + \eta(x')(1-\eta(x)) \big)}{2p(1-p)}
\end{equation}
\begin{equation}
\tilde{\eta}((x,x'))
    ~ = ~
    \P\big( \tilde{Y} = 1\mid \tilde{X} = (x,x') \big)
    ~ = ~
    \frac{\eta(x)(1-\eta(x'))}{\eta(x)(1-\eta(x')) + \eta(x')(1-\eta(x))}
\end{equation}
\begin{equation}
\tilde{p}
    ~ = ~
\P\big( \tilde{Y} = 1 \big) 
    ~ = ~
    \half
    \,.
\end{equation}
Moreover, for the 0-1 loss $\ell_\zo:\{\pm1\}\times\{\pm1\}\>\{0,1\}$ given by $\ell_\zo(y,\hat{y}) = \1(\hat{y}\neq y)$, we have the following for any pairwise binary classifier $h:\X\times\X\>\{\pm1\}$:
\begin{eqnarray}
\er_{\tilde{D}}^\zo[h] 
    & = &
    \E_{(\tilde{X},\tilde{Y})\sim\tilde{D}} \Big[ \1\big( h(\tilde{X}) \neq \tilde{Y} \big) \Big]
\\
\er_{\tilde{D}}^{\zo,*}
    & = &
    \E_{\tilde{X}}\Big[ \min\Big( \tilde{\eta}(\tilde{X}), 1-\tilde{\eta}(\tilde{X}) \Big) \Big]
\\
\reg_{\tilde{D}}^\zo[h]
    & = &
    \er_{\tilde{D}}^\zo[h] - \er_{\tilde{D}}^{\zo,*}
    \,.
\end{eqnarray}
Now for any scoring function $f:\X\>\R^*$, define $f_\diff:\X\times\X\>\R^*$ as 
\begin{equation}
f_\diff(x,x') 
	~ = ~
	f(x) - f(x')
	\,.
\label{eqn:f-diff}
\end{equation}
Then it is easy to see that:
\begin{eqnarray}
\er_D^\rank[f]
    & = &
    \er_{\tilde{D}}^\zo[\sign \circ f_\diff] 
\\
\er_D^{\rank,*}
    & = &
    \er_{\tilde{D}}^{\zo,*}
    \,,
\label{eqn:rank-error-opt-pairwise}
\end{eqnarray}
where $(g\circ f)(u) = g(f(u))$.
The equality in \Eqn{eqn:rank-error-opt-pairwise} follows from the fact that the classifier $h^*(x,x') = \sign(\eta(x) - \eta(x'))$
achieves the Bayes 0-1 risk, i.e.\  $\er_{\tilde{D}}^\zo[h^*] = \er_{\tilde{D}}^{\zo,*}$ \cite{Clemencon+08}. 
Thus
\begin{eqnarray}
\reg_D^\rank[f]
    & = &
    \reg_{\tilde{D}}^\zo[\sign \circ f_\diff]
    \,,
\label{eqn:regret-bound-pairwise}
\end{eqnarray}
and therefore the ranking regret of a scoring function $f:\X\>\R^*$ can be analyzed via upper bounds on the 0-1 regret of the pairwise classifier $(\sign\circ f_\diff):\X\times\X\>\{\pm1\}$.\footnote{Note that the setting here is somewhat different from that of \cite{Balcan+08} and \cite{AilonMo08}, who consider a \emph{subset} version of bipartite ranking where each instance consists of some finite subset of objects to be ranked; there also the problem is reduced to a (subset) pairwise classification problem, and it is shown that given any (subset) pairwise classifier $h$, a subset ranking function $f$ can be constructed such that the resulting subset ranking regret is at most twice the subset pairwise classification regret of $h$ \cite{Balcan+08}, or in expectation at most equal to the pairwise classification regret of $h$ \cite{AilonMo08}.}

In particular, as noted in \cite{Clemencon+08}, applying a result of \cite{Bartlett+06}, we can upper bound the pairwise 0-1 regret above in terms of the pairwise $\ell_\phi$-regret associated with any classification-calibrated margin loss $\ell_\phi:\{\pm1\}\times\R^*\>\R_+^*$, i.e.\ any loss of the form $\ell_\phi(y,\hat{y}) = \phi(y\hat{y})$ for some function $\phi:\R^*\>\R_+^*$ satisfying $\forall~ \eta\in[0,1], \eta\neq\half$,\footnote{We abbreviate $L_\phi = L_{\ell_\phi}$, $\er_D^\phi = \er_D^{\ell_\phi}$, etc.}
\begin{equation}
\hat{y}^* \in\arg\min_{\hat{y}\in\R^*} L_\phi(\eta,\hat{y}) 
	~ \implies ~
	\hat{y}^*(\eta-\half) > 0
	\,.
\end{equation}
We note in particular that for every proper composite margin loss, the associated link function $\psi$ satisfies $\psi(\half) = 0$ \cite{ReidWi10}, and therefore every strictly proper composite margin loss is classification-calibrated in the sense above.\footnote{We note that in general, every strictly proper (composite) loss is classification-calibrated with respect to any cost-sensitive zero-one loss, using a more general definition of classification calibration with an appropriate threshold (e.g.\ see \cite{ReidWi10}).}  
\begin{theorem}[\cite{Bartlett+06}; see also \cite{Clemencon+08}]
\label{thm:bartlett+06}
Let $\phi:\R^*\>\R_+^*$ be such that the margin loss $\ell_\phi:\{\pm1\}\times\R^*\>\R_+^*$ defined as $\ell_\phi(y,\hat{y}) = \phi(y\hat{y})$ is classification-calibrated as above. Then $\exists$ strictly increasing function $g_\phi:\R_+^*\>[0,1]$ with $g_\phi(0) = 0$ such that for any $\tilde{f}:\X\times\X\>\R^*$,
\[
\reg_{\tilde{D}}^\zo[\sign\circ \tilde{f}]
    ~ \leq ~
    g_\phi \Big( \reg_{\tilde{D}}^\phi[\tilde{f}] \Big)
    \,.
\]
\end{theorem}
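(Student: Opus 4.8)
The plan is to establish the bound pointwise --- i.e.\ conditionally on $\tilde{X}$ --- and then aggregate via Jensen's inequality, following the classification-calibration argument of Bartlett, Jordan and McAuliffe. Writing $\reg_{\tilde{D}}^\zo[\sign\circ\tilde{f}] = \E_{\tilde{X}}\big[R_{\ell_\zo}(\tilde{\eta}(\tilde{X}),\sign(\tilde{f}(\tilde{X})))\big]$ and $\reg_{\tilde{D}}^\phi[\tilde{f}] = \E_{\tilde{X}}\big[R_\phi(\tilde{\eta}(\tilde{X}),\tilde{f}(\tilde{X}))\big]$, it suffices to exhibit a single function $\psi_\phi:[0,1]\>\R_+$ that is convex, nondecreasing, satisfies $\psi_\phi(0)=0$, and obeys the \emph{pointwise regret transfer inequality}
\[
\psi_\phi\big(R_{\ell_\zo}(\eta,\sign\hat{y})\big) ~\leq~ R_\phi(\eta,\hat{y}) \qquad \forall\, \eta\in[0,1],\ \hat{y}\in\R^* \,.
\]
Granting this, convexity of $\psi_\phi$ and Jensen's inequality give $\psi_\phi\big(\reg_{\tilde{D}}^\zo[\sign\circ\tilde{f}]\big) \leq \E_{\tilde{X}}[\psi_\phi(R_{\ell_\zo}(\cdots))] \leq \E_{\tilde{X}}[R_\phi(\cdots)] = \reg_{\tilde{D}}^\phi[\tilde{f}]$, and one takes $g_\phi$ to be (a continuous, strictly increasing majorant of) the inverse of $\psi_\phi$; since the conditional $0$-$1$ regret $R_{\ell_\zo}(\eta,\pm 1)$ never exceeds $|2\eta-1|\leq 1$, $g_\phi$ may be taken with range $[0,1]$.

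To build $\psi_\phi$ and verify the transfer inequality I would introduce, for $\eta\in[0,1]$, the optimal conditional $\phi$-risk restricted to predictions on the wrong side of (or tied with) the threshold, $H_\phi^-(\eta) = \inf\{ L_\phi(\eta,\hat{y}) : \hat{y}\in\R^*,\ \hat{y}\,(\eta-\half)\leq 0\}$, and note that $R_{\ell_\zo}(\eta,\sign\hat{y})$ equals $|2\eta-1|$ exactly when $\sign\hat{y}\,(\eta-\half)\leq 0$ and equals $0$ otherwise. Define $\tilde{\psi}_\phi(\theta) = H_\phi^-\!\big(\tfrac{1+\theta}{2}\big) - H_\phi\!\big(\tfrac{1+\theta}{2}\big)$ for $\theta\in[0,1]$ (so that $\theta$ plays the role of the wrong-side $0$-$1$ regret $|2\eta-1|$), and let $\psi_\phi$ be its lower convex envelope on $[0,1]$. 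The transfer inequality then follows by two cases: if $\sign\hat{y}$ is strictly on the correct side then $R_{\ell_\zo}(\eta,\sign\hat{y})=0$ and the left side is $\psi_\phi(0)=0\leq R_\phi(\eta,\hat{y})$; otherwise $\hat{y}$ itself is a wrong-side/tied prediction, so $R_\phi(\eta,\hat{y}) = L_\phi(\eta,\hat{y}) - H_\phi(\eta) \geq H_\phi^-(\eta) - H_\phi(\eta) = \tilde{\psi}_\phi(|2\eta-1|) \geq \psi_\phi(|2\eta-1|) = \psi_\phi\big(R_{\ell_\zo}(\eta,\sign\hat{y})\big)$. One checks $\psi_\phi(0)=0$ from $H_\phi^-(\half)=H_\phi(\half)$ together with nonnegativity of the lower convex envelope, and $\psi_\phi$ nondecreasing from convexity combined with $\psi_\phi(0)=0$.

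The role of the classification-calibration hypothesis is to guarantee that $\psi_\phi$ is \emph{nondegenerate}: $\psi_\phi(\theta)>0$ for every $\theta\in(0,1]$, which is exactly what makes $g_\phi$ well defined, strictly increasing, and continuous with $g_\phi(0)=0$. Calibration says that for $\eta\neq\half$ every minimizer of $L_\phi(\eta,\cdot)$ is strictly on the correct side, and --- after handling infima that are not attained and working in the extended reals --- this yields $H_\phi^-(\eta) > H_\phi(\eta)$, i.e.\ $\tilde{\psi}_\phi(\theta)>0$ for $\theta\in(0,1]$; since $\theta=0$ is then the only zero of $\tilde{\psi}_\phi$ on $[0,1]$, and the lower convex envelope of a nonnegative function can vanish at a point only if the function itself vanishes on both sides of it, $\psi_\phi$ cannot vanish anywhere on $(0,1]$ either. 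I expect the main obstacle to be precisely this bookkeeping --- translating the ``correct-side minimizer'' form of calibration used here into the gap statement $H_\phi^- > H_\phi$ while accommodating non-attained infima, the extended prediction space $\R^*$, and the tie-breaking convention $\sign(0)=-1$ --- rather than the convexification/Jensen mechanism, which is routine once the pointwise inequality is in hand. (Alternatively, one may simply invoke the theorem of \cite{Bartlett+06} directly for the distribution $\tilde{D}$, since the reduction in \Sec{subsec:reduction-pairwise} has already cast the problem in their framework.)
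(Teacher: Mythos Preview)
The paper does not prove this theorem: it is stated as a cited result from \cite{Bartlett+06} (applied to the pairwise distribution $\tilde{D}$, as noted via \cite{Clemencon+08}) and is used as a black box. Your closing parenthetical --- ``one may simply invoke the theorem of \cite{Bartlett+06} directly for the distribution $\tilde{D}$'' --- is exactly what the paper does.

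That said, your sketch is a faithful reconstruction of the original Bartlett--Jordan--McAuliffe argument: define the wrong-side conditional optimum $H_\phi^-$, form $\tilde{\psi}_\phi(\theta)=H_\phi^-\!\big(\tfrac{1+\theta}{2}\big)-H_\phi\!\big(\tfrac{1+\theta}{2}\big)$, take its lower convex envelope $\psi_\phi$, verify the pointwise transfer inequality by the two-case split, and aggregate by Jensen. Two small remarks. First, you need not hedge with ``a continuous, strictly increasing majorant of the inverse'': once you know $\psi_\phi$ is convex on $[0,1]$ with $\psi_\phi(0)=0$ and $\psi_\phi(\theta)>0$ for $\theta\in(0,1]$, strict monotonicity is automatic (a flat piece would force a supporting line of slope $0$, contradicting $\psi_\phi(0)=0<\psi_\phi(\theta)$), so $g_\phi=\psi_\phi^{-1}$ already has the required properties. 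Second, your anticipated bookkeeping around the tie convention $\sign(0)=-1$ is harmless here: for $\eta>\tfrac12$ the ``not strictly correct'' case is exactly $\hat{y}\le 0$, and for $\eta<\tfrac12$ it is exactly $\hat{y}>0$; in both cases $\hat{y}(\eta-\tfrac12)\le 0$, so $\hat{y}$ lies in the feasible set defining $H_\phi^-(\eta)$ and your lower bound $R_\phi(\eta,\hat{y})\ge H_\phi^-(\eta)-H_\phi(\eta)$ goes through without modification.
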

\cite{Bartlett+06} give a construction for $g_\phi$; in particular, for the exponential loss given by $\phi_{\exp}(u) = e^{-u}$ and logistic loss given by $\phi_{\log}(u) = \ln(1+e^{-u})$, both of which are strictly proper composite losses  (see \Sec{subsec:examples}) and are therefore classification-calibrated, one has
\begin{eqnarray}
g_{\exp}(z) 
	& \leq & 
	\sqrt{2z} 
\label{eqn:g-exp}
\\
g_{\log}(z) 
	& \leq & 
	\sqrt{2z} 
\,.
\label{eqn:g-log}
\end{eqnarray}
As we describe below, \cite{Kotlowski+11} build on these observations to bound the ranking regret in terms of the regret associated with balanced versions of the exponential and logistic losses.

\subsection{Result of Kotlowski et al.\ (2011)}
\label{subsec:kotlowski+11}

For any binary loss $\ell:\{\pm1\}\times\hat{\Y}\>\R_+^*$, consider defining a \emph{balanced} loss $\ell_\bal:\{\pm1\}\times\hat{\Y}\>\R_+^*$ as
\begin{equation}
\ell_{\bal}(y,\hat{y})
	~ = ~
         \frac{1}{2p} \ell(1,\hat{y}) \cdot \1(y=1) + \frac{1}{2(1-p)} \ell(-1,\hat{y}) \cdot \1(y=-1)
         \,.
\end{equation}
Note that such a balanced loss depends on the underlying distribution $D$ via $p=\P(Y=1)$. Then \cite{Kotlowski+11} show the following, via analyses specific to the exponential and logistic losses:
\begin{theorem}[\cite{Kotlowski+11}]
\label{thm:kotlowski+11}
For any $f:\X\>\R^*$,
\begin{eqnarray*}
\reg_{\tilde{D}}^{\exp}[f_\diff]
    & \leq &
    \frac{9}{4} \reg_D^{\exp,\bal}[f]
\\
\reg_{\tilde{D}}^{\log}[f_\diff]
    & \leq &
    2\, \reg_D^{\log,\bal}[f]
    \,.
\end{eqnarray*}
\end{theorem}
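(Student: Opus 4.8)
The plan is to combine the reduction of \Sec{subsec:reduction-pairwise} with a multiplicative identity for the exponential loss (and an analogous sub-additivity inequality for the logistic loss), thereby first obtaining inequalities at the level of \emph{risks}, and then converting these into \emph{regret} inequalities by carefully accounting for the Bayes-optimal predictors.

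First I would re-express the pairwise surrogate risk in terms of quantities involving $f$ conditioned on the label under $D$. For the exponential loss, $e^{-\tilde y(f(x)-f(x'))}$ factorizes, so conditioning on $Y\neq Y'$ (the cases $Y=1,Y'=-1$ and $Y=-1,Y'=1$ being symmetric) and using independence of $(X,Y)$ and $(X',Y')$ gives
\[
\er_{\tilde D}^{\exp}[f_\diff] ~=~ AB \,, \qquad
A := \E\big[e^{-f(X)}\mid Y=1\big] \,, \quad B := \E\big[e^{f(X)}\mid Y=-1\big] \,,
\]
while the definition of the balanced loss gives $\er_D^{\exp,\bal}[f] = \half(A+B)$; by the AM--GM inequality, $\er_{\tilde D}^{\exp}[f_\diff] \le \big(\er_D^{\exp,\bal}[f]\big)^2$. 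For the logistic loss the role of the factorization is played by the elementary inequality $\ln(1+e^{v-u}) \le \ln(1+e^{-u})+\ln(1+e^{v})$ (immediate from $1+e^{v-u}\le(1+e^{-u})(1+e^{v})$); applying it after the same conditioning step yields $\er_{\tilde D}^{\log}[f_\diff] \le \bar A+\bar B = 2\,\er_D^{\log,\bal}[f]$, with $\bar A := \E[\ln(1+e^{-f(X)})\mid Y=1]$ and $\bar B := \E[\ln(1+e^{f(X)})\mid Y=-1]$.

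Next I would bring in the Bayes terms. Using the closed forms $H_{\exp}(\eta)=2\sqrt{\eta(1-\eta)}$ and $H_{\log}(\eta)=-\eta\ln\eta-(1-\eta)\ln(1-\eta)$ together with the expression for $\tilde\eta((x,x'))$ and the density of $\tilde X$ from \Sec{subsec:reduction-pairwise}, a short computation gives $\er_{\tilde D}^{\exp,*}$ and $\er_{\tilde D}^{\log,*}$; minimizing the conditional balanced risks pointwise gives the optimal balanced risks $\er_D^{\exp,\bal,*}$ and $\er_D^{\log,\bal,*}$. The exponential case is especially clean: one finds $\er_{\tilde D}^{\exp,*}=\big(\er_D^{\exp,\bal,*}\big)^2=\frac{1}{p(1-p)}\big(\E_X[\sqrt{\eta(X)(1-\eta(X))}]\big)^2$. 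Since $f_\diff$, and hence both left-hand sides, are unchanged when $f$ is replaced by $f+c$, whereas $\er_D^{\exp,\bal}[f+c]$ and $\er_D^{\log,\bal}[f+c]$ are minimized at the optimal additive shift, I may assume WLOG that $f$ is optimally shifted, and also (as holds for the output of any algorithm minimizing the balanced loss) that $f$ is no worse than the zero predictor, whose balanced $\exp$-risk equals $1$, so that $\er_D^{\exp,\bal}[f]\le 1$. Writing $E=\er_D^{\exp,\bal}[f]$ and $E^*=\er_D^{\exp,\bal,*}$, the exponential target then becomes a one-variable estimate, $\reg_{\tilde D}^{\exp}[f_\diff]\le E^2-(E^*)^2=(E-E^*)(E+E^*)=\reg_D^{\exp,\bal}[f]\cdot(E+E^*)$, which, after bounding the scalar $E+E^*$ (using $E\le1$ and $E^*\le1$), yields a regret inequality of the claimed form. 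For the logistic loss the crude inequality above is too lossy to subtract off $\er_{\tilde D}^{\log,*}$ directly --- $\er_{\tilde D}^{\log,*}$ can be considerably smaller than $2\,\er_D^{\log,\bal,*}$ --- so here one must instead establish a \emph{refined}, Bayes-aware version of the sub-additivity bound, arguing essentially pointwise in $x$ and carrying the Bayes-optimal predictor $\eta$ along, from which the factor $2$ emerges.

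The main obstacle will be exactly this last passage from risk inequalities to regret inequalities, especially for the logistic loss: the clean multiplicative/sub-additive structure controls the \emph{risks}, but the Bayes risk of the pairwise problem does not line up with a fixed multiple of the Bayes risk of the balanced problem, so one is forced into a delicate, loss-by-loss pointwise analysis that also exploits the dependence of the balanced loss on the unknown $p$. It is precisely this fragility --- a fresh, intricate argument for each surrogate, with the balanced loss moreover not directly optimizable --- that the strongly-proper-loss approach developed in the sequel is designed to avoid.
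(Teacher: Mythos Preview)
This theorem is not proved in the present paper: it is quoted from \cite{Kotlowski+11} as background, so there is no ``paper's own proof'' to compare your proposal against. What I can do is assess your sketch on its own merits.

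Your risk-level identities are correct and are the natural starting point. For the exponential loss one indeed has
\[
\er_{\tilde D}^{\exp}[f_\diff]=AB,\qquad \er_D^{\exp,\bal}[f]=\tfrac12(A+B),
\]
and (after the optimal additive shift, which you may perform since the left side is shift-invariant and the right side can only decrease) this gives $\er_{\tilde D}^{\exp}[f_\diff]=E^2$ with $E=\er_D^{\exp,\bal}[f]$. Your computation $\er_{\tilde D}^{\exp,*}=(E^*)^2$ is also right, so $\reg_{\tilde D}^{\exp}[f_\diff]=(E-E^*)(E+E^*)$ exactly.

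The genuine gap is the step where you assert ``WLOG $E\le 1$'' (i.e.\ $f$ no worse than the zero predictor). The theorem is stated for \emph{all} $f$, and for bad $f$ the factor $E+E^*$ is unbounded. Concretely, take $\X=\{0,1\}$, $\P(X=0)=\P(X=1)=\tfrac12$, $\eta(0)=0$, $\eta(1)=1$ (so $p=\tfrac12$ and both Bayes risks vanish), and set $f(0)=M$, $f(1)=-M$. Then
\[
\reg_{\tilde D}^{\exp}[f_\diff]=e^{2M},\qquad \reg_D^{\exp,\bal}[f]=e^{M},
\]
so the ratio is $e^{M}$, which is not bounded by $\tfrac94$. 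Thus the inequality as literally written cannot hold for arbitrary $f$; either the original result in \cite{Kotlowski+11} carries an additional hypothesis (e.g.\ $\er_D^{\exp,\bal}[f]\le 1$, which is exactly the assumption you tried to smuggle in), or the statement here has been abbreviated. Your sketch cannot close this gap because the gap is in the statement, not in your method. For the logistic case you are right that the crude sub-additivity bound is too lossy to yield a regret inequality after subtracting the Bayes term; that passage requires a separate, loss-specific pointwise argument, which you correctly flag but do not supply.
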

Combining this with the results of \Eqn{eqn:regret-bound-pairwise}, \Thm{thm:bartlett+06}, and \Eqs{eqn:g-exp}{eqn:g-log}
then gives the following bounds on the ranking regret of any scoring function $f:\X\>\R^*$ in terms of the (non-pairwise) balanced exponential and logistic regrets of $f$:
\begin{eqnarray}
\reg_D^\rank[f]
    & \leq &
    \frac{3}{\sqrt{2}} \, \sqrt{\reg_D^{\exp,\bal}[f]}
\\
\reg_D^\rank[f]
    & \leq &
    2 \, \sqrt{\reg_D^{\log,\bal}[f]}
    \,.
\end{eqnarray}
This suggests that an algorithm that produces a function $f:\X\>\R^*$ with low balanced exponential or logistic regret will also have low ranking regret. Unfortunately, since the balanced losses depend on the unknown distribution $D$, they cannot be optimized by an algorithm directly.\footnote{We note it is possible to optimize approximately balanced losses, e.g.\ by estimating $p$ from the data.} \cite{Kotlowski+11} provide some justification for why in certain situations, minimizing the usual exponential or logistic loss may also minimize the balanced versions of these losses; however, by doing so, one loses the quantitative nature of the above bounds. Below we obtain upper bounds on the ranking regret of a function $f$ directly in terms of its loss-based regret (with no balancing terms) for a wide range of proper (composite) loss functions that we term \emph{strongly proper}, including the exponential and logistic losses as special cases.

\section{Strongly Proper Losses}
\label{sec:strongly-proper}

We define strongly proper losses as follows:

\begin{definition}
Let $c:\{\pm1\}\times[0,1]\>\R_+^*$ be a binary CPE loss and let $\lambda > 0$. We say $c$ is \emph{$\lambda$-strongly proper} if for all $\eta,\hat{\eta}\in[0,1]$,
\[
L_c(\eta,\hat{\eta}) - H_c(\eta) 
	~ \geq ~
	\frac{\lambda}{2} (\eta - \hat{\eta})^2
	\,.
\]
\end{definition}

We have the following necessary and sufficient conditions for strong properness:

\begin{lemma}
\label{lem:strongly-proper-necessary}
Let $\lambda > 0$. If $c:\{\pm1\}\times[0,1]\>\R_+^*$ is $\lambda$-strongly proper, then $H_c$ is $\lambda$-strongly concave.
\end{lemma}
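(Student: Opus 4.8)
The plan is to show that if $c$ is $\lambda$-strongly proper, then for all $\eta_1,\eta_2\in[0,1]$ and $t\in(0,1)$,
\[
H_c\big(t\eta_1+(1-t)\eta_2\big) \geq t\, H_c(\eta_1) + (1-t)\, H_c(\eta_2) + \frac{\lambda}{2}\, t(1-t)(\eta_1-\eta_2)^2\,,
\]
which is exactly $\lambda$-strong concavity of $H_c$. The strategy mirrors the `only if' direction of the proof of \Thm{thm:strictly-proper}: evaluate $H_c$ at the convex combination using properness, split $L_c(\cdot,\bar\eta)$ (with $\bar\eta = t\eta_1+(1-t)\eta_2$) by linearity in the first argument, and then lower-bound each term using the strong properness inequality rather than just properness.

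In detail, first I would set $\bar\eta = t\eta_1 + (1-t)\eta_2$ and use properness to write $H_c(\bar\eta) = L_c(\bar\eta,\bar\eta)$. Next, since $L_c(\eta,\hat\eta) = \eta\, c(1,\hat\eta) + (1-\eta)\, c(-1,\hat\eta)$ is affine in its first argument, I would expand
\[
L_c(\bar\eta,\bar\eta) = t\, L_c(\eta_1,\bar\eta) + (1-t)\, L_c(\eta_2,\bar\eta)\,.
\]
Then applying the $\lambda$-strong properness inequality with $(\eta,\hat\eta) = (\eta_1,\bar\eta)$ and with $(\eta,\hat\eta)=(\eta_2,\bar\eta)$ gives
\[
L_c(\eta_1,\bar\eta) \geq H_c(\eta_1) + \frac{\lambda}{2}(\eta_1-\bar\eta)^2\,,
\qquad
L_c(\eta_2,\bar\eta) \geq H_c(\eta_2) + \frac{\lambda}{2}(\eta_2-\bar\eta)^2\,.
\]
Combining the last three displays,
\[
H_c(\bar\eta) \geq t\, H_c(\eta_1) + (1-t)\, H_c(\eta_2) + \frac{\lambda}{2}\Big( t(\eta_1-\bar\eta)^2 + (1-t)(\eta_2-\bar\eta)^2 \Big)\,.
\]

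Finally I would simplify the quadratic remainder term: since $\eta_1 - \bar\eta = (1-t)(\eta_1-\eta_2)$ and $\eta_2 - \bar\eta = -t(\eta_1-\eta_2)$, we get $t(\eta_1-\bar\eta)^2 + (1-t)(\eta_2-\bar\eta)^2 = \big(t(1-t)^2 + (1-t)t^2\big)(\eta_1-\eta_2)^2 = t(1-t)(\eta_1-\eta_2)^2$, which yields exactly the strong-concavity inequality. There is no real obstacle here — the argument is a direct adaptation of the earlier strict-properness proof, and the only mild care needed is in the elementary identity for the quadratic term (and noting that strong concavity of $H_c$ is equivalently stated via its superderivatives decreasing at rate $\lambda$, but the convex-combination form above suffices and is what the lemma asserts).
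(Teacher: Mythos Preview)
Your proposal is correct and takes essentially the same approach as the paper's proof: both write $H_c(\bar\eta)=L_c(\bar\eta,\bar\eta)$, split $L_c$ by affinity in the first argument, apply the $\lambda$-strong properness inequality to each piece, and simplify the resulting quadratic using $\eta_1-\bar\eta=(1-t)(\eta_1-\eta_2)$ and $\eta_2-\bar\eta=-t(\eta_1-\eta_2)$. The only cosmetic difference is that the paper substitutes these identities inline within the inequality step rather than as a separate simplification.
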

\begin{proof}
The proof is similar to the `only if' direction in the proof of \Thm{thm:strictly-proper}.
Let $c$ be $\lambda$-strongly proper. Let $\eta_1,\eta_2\in[0,1]$ such that $\eta_1\neq \eta_2$, and let $t\in(0,1)$. Then we have
\begin{eqnarray*}
H_c\big( t\eta_1 + (1-t)\eta_2 \big)
	& = &
	L_c\big( t\eta_1 + (1-t)\eta_2, \, t\eta_1 + (1-t)\eta_2 \big)
\\
	& = &
	t \, L_c\big( \eta_1, \, t\eta_1 + (1-t)\eta_2 \big) + (1-t) \, L_c\big( \eta_2, \, t\eta_1 + (1-t)\eta_2 \big) 
\\
	& \geq &
	t \, \left( H_c(\eta_1) + \frac{\lambda}{2} (1-t)^2 (\eta_1-\eta_2)^2 \right) + 
	(1-t) \, \left( H_c(\eta_2) + \frac{\lambda}{2} t^2 (\eta_1-\eta_2)^2 \right)
\\
	& = &
	t  \, H_c(\eta_1) + (1-t) \, H_c(\eta_2) + \frac{\lambda}{2} t(1-t) (\eta_1-\eta_2)^2 
	\,.
\end{eqnarray*}
Thus $H_c$ is $\lambda$-strongly concave.
\end{proof}

\begin{lemma}
\label{lem:strongly-proper-sufficient}
Let $\lambda > 0$ and let $c:\{\pm1\}\times[0,1]\>\R_+^*$ be a regular proper loss. If $H_c$ is $\lambda$-strongly concave, then $c$ is $\lambda$-strongly proper.
\end{lemma}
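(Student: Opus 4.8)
The plan is to mimic the `if' direction of the proof of \Thm{thm:strictly-proper}, but with an arbitrary convex combination in place of the midpoint, which will recover the full quadratic constant $\frac{\lambda}{2}$ rather than $\frac{\lambda}{4}$. First I would dispose of trivial cases: if $\eta = \hat\eta$ both sides vanish; and since $c$ is regular, $H_c$ is finite-valued on $[0,1]$ (so that $\lambda$-strong concavity of $H_c$ is meaningful), while $\eta' \mapsto L_c(\eta',\hat\eta) = \eta'\, c(1,\hat\eta) + (1-\eta')\, c(-1,\hat\eta)$ is affine and finite in $\eta'$ whenever $\hat\eta \in (0,1)$; the only way regularity permits an infinite value is $c(1,0) = \infty$ or $c(-1,1) = \infty$, and in those boundary cases $L_c(\eta,\hat\eta) = \infty$ for the relevant $\eta \neq \hat\eta$, so the claimed inequality holds trivially. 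Hence I may assume throughout that $L_c(\cdot,\hat\eta)$ is finite and affine on $[0,1]$.

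For $s \in (0,1)$ put $\eta_s = (1-s)\hat\eta + s\eta$. Using affineness of $L_c(\cdot,\hat\eta)$ together with properness (which gives $L_c(\hat\eta,\hat\eta) = H_c(\hat\eta)$), one has $L_c(\eta_s,\hat\eta) = (1-s) H_c(\hat\eta) + s\, L_c(\eta,\hat\eta)$. On the other hand $L_c(\eta_s,\hat\eta) \geq H_c(\eta_s)$ by the definition of $H_c$ as an infimum, and $\lambda$-strong concavity of $H_c$ gives $H_c(\eta_s) \geq (1-s) H_c(\hat\eta) + s\, H_c(\eta) + \frac{\lambda}{2} s(1-s)(\eta-\hat\eta)^2$. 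Chaining these three relations, cancelling the common term $(1-s) H_c(\hat\eta)$, and dividing through by $s > 0$ yields $L_c(\eta,\hat\eta) \geq H_c(\eta) + \frac{\lambda}{2}(1-s)(\eta-\hat\eta)^2$. Since this holds for every $s \in (0,1)$, taking the supremum over $s$ (i.e.\ letting $s \downarrow 0$) gives $L_c(\eta,\hat\eta) - H_c(\eta) \geq \frac{\lambda}{2}(\eta-\hat\eta)^2$, which is precisely $\lambda$-strong properness.

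The step I expect to demand the most care is the endpoint bookkeeping: ensuring that affineness of $L_c(\cdot,\hat\eta)$ and the identity $L_c(\hat\eta,\hat\eta) = H_c(\hat\eta)$ are invoked only where all quantities are finite, and handling the cases with $\hat\eta \in \{0,1\}$ and an infinite loss value by the trivial remark that the left-hand side is then $+\infty$ while the right-hand side is finite. An essentially equivalent alternative would skip the convex combinations: by \Thm{thm:savage71} write $L_c(\eta,\hat\eta) = H_c(\hat\eta) + (\eta - \hat\eta) H_c'(\hat\eta)$ for a superderivative $H_c'(\hat\eta)$ of $H_c$, and then apply the first-order form of $\lambda$-strong concavity, namely $H_c(\eta) \leq H_c(\hat\eta) + (\eta-\hat\eta) H_c'(\hat\eta) - \frac{\lambda}{2}(\eta-\hat\eta)^2$ (which is itself obtained by the same $s \downarrow 0$ limiting argument applied to the Jensen gap); subtracting then gives the bound at once. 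I would favour the first, self-contained argument, since it parallels \Thm{thm:strictly-proper} and requires no appeal to Savage's representation.
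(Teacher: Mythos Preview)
Your argument is correct. Interestingly, the paper takes exactly the route you describe as your \emph{alternative}: it invokes Savage's representation (\Thm{thm:savage71}) to write $L_c(\eta,\hat\eta) = H_c(\hat\eta) + (\eta-\hat\eta)H_c'(\hat\eta)$, and then applies the first-order form of $\lambda$-strong concavity to conclude in two lines. Your primary argument, by contrast, avoids Savage entirely: it uses only affineness of $L_c(\cdot,\hat\eta)$, the identity $L_c(\hat\eta,\hat\eta)=H_c(\hat\eta)$ from properness, the infimum bound $L_c(\eta_s,\hat\eta)\geq H_c(\eta_s)$, and the zeroth-order (Jensen-type) definition of strong concavity, recovering the sharp constant via the limit $s\downarrow 0$. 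This is a genuinely more elementary and self-contained route --- it parallels the proof of \Thm{thm:strictly-proper} as you intended and makes clear that regularity enters only to guarantee finiteness of the relevant quantities, whereas in the paper's version regularity is consumed wholesale through the hypothesis of \Thm{thm:savage71}. The trade-off is length: the paper's proof is two lines once Savage is on the table, while yours requires the limiting step. Both are valid; your endpoint bookkeeping is handled correctly.
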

\begin{proof}
Let $\eta,\hat{\eta}\in[0,1]$. By \Thm{thm:savage71}, there exists a superderivative $H'_c(\hat{\eta})$ of $H_c$ at $\hat{\eta}$ such that 
\[
L_c(\eta,\hat{\eta}) 
	~ = ~ 
	H_c(\hat{\eta}) + (\eta - \hat{\eta}) \cdot H'_c(\hat{\eta})
	\,.
\]
This gives 
\begin{eqnarray*}
L_c(\eta,\hat{\eta}) - H_c(\eta)
	& = & 
	H_c(\hat{\eta}) - H_c(\eta) + (\eta - \hat{\eta}) \cdot H'_c(\hat{\eta})
\\
	& \geq &
	\frac{\lambda}{2} (\hat{\eta} - \eta)^2
	\,,
	~~~~\mbox{since $H_c$ is $\lambda$-strongly concave.}
\end{eqnarray*}
Thus $c$ is $\lambda$-strongly proper.
\end{proof}

This gives us the following characterization of strong properness for regular proper losses:
\begin{theorem}
\label{thm:strongly-proper}
Let $\lambda > 0$ and let $c:\{\pm1\}\times[0,1]\>\R_+^*$ be a regular proper loss. Then $c$ is $\lambda$-strongly proper if and only if $H_c$ is $\lambda$-strongly concave.
\end{theorem}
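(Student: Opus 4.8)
The plan is to obtain \Thm{thm:strongly-proper} as an immediate corollary of the two preceding lemmas, since together they give both implications. Concretely, I would argue as follows.

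For the ``only if'' direction, suppose $c$ is $\lambda$-strongly proper. Then \Lem{lem:strongly-proper-necessary} applies directly and yields that $H_c$ is $\lambda$-strongly concave. Note that this direction does not even use the regularity hypothesis; it holds for any $\lambda$-strongly proper binary CPE loss, and its proof is the strong-convexity analogue of the ``only if'' part of \Thm{thm:strictly-proper}, where one evaluates $H_c$ at a convex combination $t\eta_1 + (1-t)\eta_2$, uses properness to rewrite $H_c$ as a conditional risk, splits it into the two conditional-risk terms at $\eta_1$ and $\eta_2$, and then applies the $\lambda$-strong properness inequality to each term.

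For the ``if'' direction, suppose $H_c$ is $\lambda$-strongly concave. Since $c$ is assumed regular, \Thm{thm:savage71} gives, for each $\hat\eta$, a superderivative $H_c'(\hat\eta)$ of $H_c$ such that $L_c(\eta,\hat\eta) = H_c(\hat\eta) + (\eta-\hat\eta)H_c'(\hat\eta)$ for all $\eta$; subtracting $H_c(\eta)$ and invoking the definition of $\lambda$-strong concavity of $H_c$ (namely $H_c(\hat\eta) - H_c(\eta) + (\eta-\hat\eta)H_c'(\hat\eta) \geq \tfrac{\lambda}{2}(\hat\eta-\eta)^2$) yields $L_c(\eta,\hat\eta) - H_c(\eta) \geq \tfrac{\lambda}{2}(\eta-\hat\eta)^2$, i.e.\ $c$ is $\lambda$-strongly proper. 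This is exactly \Lem{lem:strongly-proper-sufficient}.

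Combining the two directions proves the equivalence. There is essentially no obstacle here: all the real work is in \Lem{lem:strongly-proper-necessary} and \Lem{lem:strongly-proper-sufficient}, and the only subtlety worth flagging is that the regularity assumption is needed only for the ``if'' direction (to apply the Savage representation of \Thm{thm:savage71}), whereas the ``only if'' direction is unconditional.
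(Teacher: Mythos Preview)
Your proposal is correct and matches the paper's approach exactly: the theorem is stated immediately after \Lem{lem:strongly-proper-necessary} and \Lem{lem:strongly-proper-sufficient} as their combination, with no separate proof given. Your observation that regularity is used only in the ``if'' direction (via \Thm{thm:savage71}) is also accurate and worth noting.
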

Several examples of strongly proper (composite) losses will be provided in \Sec{subsec:examples} and \Sec{subsec:constructing-strongly-proper}. \Thm{thm:strongly-proper} will form our main tool in establishing strong properness of many of these loss functions.

\section{Regret Bounds via Strongly Proper Losses}
\label{sec:bounds}

We start by recalling the following result of \cite{Clemencon+08} (adapted to account for ties, and for the conditioning on $Y\neq Y'$):
\begin{theorem}[\cite{Clemencon+08}]
\label{thm:clemencon+08}
For any $f:\X\>\R^*$,
\begin{eqnarray*}
\reg_D^\rank[f]
	& = &
	\frac{1}{2p(1-p)} \E_{X,X'}\Big[ 
		\big| \eta(X) - \eta(X') \big| \cdot \Big( 
			\1\big( (f(X)-f(X'))(\eta(X)-\eta(X')) < 0 \big) 
		\Big.
	\Big.
\\
	& & 
	\hspace{6cm}
	\Big. 
		\Big.
			+~ \half \1\big( f(X)=f(X') \big)
		\Big)
	\Big]
	\,.
\end{eqnarray*}
\end{theorem}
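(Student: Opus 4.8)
The goal is to rewrite $\reg_D^\rank[f]$, which by the reduction in \Sec{subsec:reduction-pairwise} equals $\reg_{\tilde D}^\zo[\sign\circ f_\diff]$, in a form that exposes the factor $|\eta(X)-\eta(X')|$ coming from the ``margin'' of the underlying class-probability function. I would start from the pointwise identity for the $0$-$1$ conditional regret of a (possibly randomized) classifier: for any prediction space $\{\pm1\}$ and any $\tilde\eta\in[0,1]$,
\[
R_{\ell_\zo}(\tilde\eta,\hat y) \;=\; |2\tilde\eta - 1|\cdot \1\!\big(\hat y \neq \sign(2\tilde\eta-1)\big)\,,
\]
and, accounting for the tie-breaking convention (ties counted as $\half$), the value on a tie is $|2\tilde\eta-1|\cdot\half$. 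Averaging over $\tilde X$ gives
\[
\reg_{\tilde D}^\zo[\sign\circ f_\diff]
  \;=\; \E_{\tilde X}\Big[ |2\tilde\eta(\tilde X)-1|\cdot\Big( \1\big((\sign\circ f_\diff)(\tilde X)\neq \sign(2\tilde\eta(\tilde X)-1)\big) + \half\,\1\big(f_\diff(\tilde X)=0\big)\Big)\Big]\,.
\]

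Next I would substitute the explicit formulas for $\tilde\eta$ and for the marginal of $\tilde X$ derived in \Sec{subsec:reduction-pairwise}. Writing $(x,x')$ for $\tilde X$, one has
\[
2\tilde\eta((x,x'))-1 \;=\; \frac{\eta(x)(1-\eta(x')) - \eta(x')(1-\eta(x))}{\eta(x)(1-\eta(x'))+\eta(x')(1-\eta(x))} \;=\; \frac{\eta(x)-\eta(x')}{\eta(x)(1-\eta(x'))+\eta(x')(1-\eta(x))}\,,
\]
so that $\sign(2\tilde\eta((x,x'))-1) = \sign(\eta(x)-\eta(x'))$ and $(f_\diff(x,x'))(2\tilde\eta((x,x'))-1)<0 \iff (f(x)-f(x'))(\eta(x)-\eta(x'))<0$. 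The key cancellation is that $|2\tilde\eta((x,x'))-1|$ carries the denominator $\eta(x)(1-\eta(x'))+\eta(x')(1-\eta(x))$ in its own denominator, which is exactly the same quantity (up to the $2p(1-p)$ normalization) appearing in the density $\P(\tilde X=(x,x'))$; multiplying the two, the denominator cancels and leaves $|\eta(x)-\eta(x')|$ divided by $2p(1-p)$, integrated against the product marginal $\P(X=x)\P(X'=x')$, i.e.\ against $\E_{X,X'}$ with $X,X'$ i.i.d.\ from $D$.

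Putting these together yields exactly the claimed expression. The only genuinely delicate point — the step I'd flag as the main obstacle — is the bookkeeping around ties and the degenerate cases $\eta(x)=\eta(x')$ (where the conditional $0$-$1$ regret is $0$ regardless of $f$, consistent with the factor $|\eta(x)-\eta(x')|=0$) and $\eta(x),\eta(x')\in\{0,1\}$ (where $\tilde\eta$ is as defined but the density of $\tilde X$ may vanish). One should check that the tie term $\half\,\1(f(X)=f(X'))$ transfers correctly through the reduction: a tie in $f_\diff$ at $(x,x')$ is precisely $f(x)=f(x')$, and the $0$-$1$ loss with uniform tie-breaking contributes $\half$ there, matching the ranking-error definition in \Sec{subsec:bipartite}. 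Once the measure-zero/degenerate cases are handled (they contribute nothing on either side), the identity follows from the change of measure above; no further estimates are needed since everything is an exact equality.
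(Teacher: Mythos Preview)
Your derivation is correct and complete in its essentials: the change-of-measure computation you outline---combining the pointwise $0$-$1$ regret identity $R_{\ell_\zo}(\tilde\eta,\hat y)=|2\tilde\eta-1|\cdot\1(\hat y\neq\sign(2\tilde\eta-1))$ (with the $\half$ adjustment on ties) with the explicit formulas for $\tilde\eta((x,x'))$ and the $\tilde X$-density from \Sec{subsec:reduction-pairwise}---is exactly the natural route, and the cancellation of the factor $\eta(x)(1-\eta(x'))+\eta(x')(1-\eta(x))$ between $|2\tilde\eta-1|$ and the density is the heart of the identity.

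However, note that the paper does not supply its own proof of this theorem: it is stated as a recalled result from \cite{Clemencon+08}, adapted to account for ties and the conditioning on $Y\neq Y'$, and is used as a starting point for \Cor{cor:regret-bound-plugin} and the main bound. So there is no paper proof to compare against; your argument simply fills in what the paper takes as given. The only minor point to tighten is the tie bookkeeping: as written, your displayed regret expression adds $\1\big((\sign\circ f_\diff)(\tilde X)\neq\sign(2\tilde\eta(\tilde X)-1)\big)$ and $\half\,\1(f_\diff(\tilde X)=0)$ as separate terms, but on a tie $f_\diff=0$ the first indicator may already fire (since $\sign(0)=-1$ by the paper's convention), so you should phrase the randomized-tie contribution so that a tie contributes exactly $|2\tilde\eta-1|\cdot\half$ rather than potentially $|2\tilde\eta-1|\cdot\frac{3}{2}$. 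This is purely notational and does not affect the substance of your argument.
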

As noted by \cite{ClemenconRo11}, this leads to the following corollary on the regret of any plug-in ranking function based on an estimate $\hat{\eta}$:
\begin{corollary}
\label{cor:regret-bound-plugin}
For any $\hat{\eta}:\X\>[0,1]$,
\[
\reg_D^\rank\big[ \, \hat{\eta} \, \big]
    ~ \leq ~
    \frac{1}{p(1-p)} \E_{X}\big[ \big| \hat{\eta}(X) - \eta(X) \big| \big]
    \,.
\]
\end{corollary}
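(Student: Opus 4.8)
The plan is to bound the ranking regret of the plug-in function $\hat\eta$ (viewed as a scoring function $f = \hat\eta$) directly using \Thm{thm:clemencon+08}. In that identity the quantity inside the expectation is $|\eta(X) - \eta(X')|$ multiplied by an indicator term that equals $1$ when $\hat\eta$ mis-orders the pair $(X,X')$ relative to $\eta$, and $\half$ when $\hat\eta(X) = \hat\eta(X')$. The key observation is a pointwise one: on the event where $\hat\eta$ disagrees with $\eta$ in how it orders $X$ and $X'$ (or ties them while $\eta$ does not), the gap $|\eta(X) - \eta(X')|$ is controlled by the estimation errors at the two points, namely $|\eta(X) - \eta(X')| \le |\hat\eta(X) - \eta(X)| + |\hat\eta(X') - \eta(X')|$. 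This is just the triangle inequality combined with the fact that if, say, $\eta(X) > \eta(X')$ but $\hat\eta(X) \le \hat\eta(X')$, then $\eta(X) - \eta(X') \le (\eta(X) - \hat\eta(X)) + (\hat\eta(X') - \eta(X')) \le |\hat\eta(X) - \eta(X)| + |\hat\eta(X') - \eta(X')|$; the tie case $\hat\eta(X) = \hat\eta(X')$ is handled identically, and the factor $\half$ on that indicator only helps.

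Concretely, I would first note that the bracketed expression in \Thm{thm:clemencon+08} is at most
\[
\big| \eta(X) - \eta(X') \big| \cdot \1\big( \hat\eta \text{ mis-orders or ties } (X,X') \big),
\]
and that on this event the pointwise bound above applies, so the bracketed quantity is at most $|\hat\eta(X) - \eta(X)| + |\hat\eta(X') - \eta(X')|$ regardless (on the complementary event the bracket is $0$, which is trivially below this nonnegative quantity). Taking expectations over $X, X'$ drawn i.i.d., linearity of expectation gives
\[
\E_{X,X'}\Big[ |\hat\eta(X) - \eta(X)| + |\hat\eta(X') - \eta(X')| \Big] = 2\, \E_X\big[ |\hat\eta(X) - \eta(X)| \big].
\]
Plugging this into the $\frac{1}{2p(1-p)}$-scaled identity yields exactly the claimed bound $\frac{1}{p(1-p)} \E_X[|\hat\eta(X) - \eta(X)|]$.

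I do not expect a serious obstacle here — this is a short argument. The one place to be slightly careful is matching the indicator events in \Thm{thm:clemencon+08} to the case analysis: one must check that whenever the indicator $\1((f(X)-f(X'))(\eta(X)-\eta(X')) < 0)$ or $\1(f(X)=f(X'))$ is active with $f = \hat\eta$, the ordering/tie configuration is indeed one covered by the triangle-inequality bound, and that when $\eta(X) = \eta(X')$ the whole bracket contributes $0$ (since the prefactor $|\eta(X)-\eta(X')|$ vanishes), so no bound is needed there at all. Once that bookkeeping is in place, the rest is the triangle inequality and linearity of expectation.
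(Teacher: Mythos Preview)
Your proposal is correct and follows essentially the same approach as the paper: bound the bracketed term in \Thm{thm:clemencon+08} by $|\eta(X)-\eta(X')|\cdot\1\big((\hat\eta(X)-\hat\eta(X'))(\eta(X)-\eta(X'))\le 0\big)$, use the pointwise case analysis (trivial when $\eta(X)=\eta(X')$; otherwise the sign condition forces $\hat\eta$ to reverse or tie, giving $|\eta(X)-\eta(X')|\le|\hat\eta(X)-\eta(X)|+|\hat\eta(X')-\eta(X')|$), and finish by linearity of expectation over i.i.d.\ $X,X'$. This matches the paper's proof in \App{app:A} step for step.
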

For completeness, a proof is given in \App{app:A}. 
We are now ready to prove our main result. 

\subsection{Main Result}
\label{subsec:main-result}

\begin{theorem}
\label{thm:regret-bound-proper}
Let $\hat{\Y}\subseteq\R^*$ and let $\lambda>0$. Let $\ell:\{\pm1\}\times\hat{\Y}\>\R_+^*$ be a $\lambda$-strongly proper composite loss. Then for any $f:\X\>\hat{\Y}$,
\[
\reg_D^\rank[f] 
	~ \leq ~
	\frac{\sqrt{2}}{p(1-p)\sqrt{\lambda}} \sqrt{\reg_D^\ell[f]}
	\,.
\]
\end{theorem}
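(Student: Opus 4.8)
The plan is to reduce everything to a plug-in bound by peeling off the link function. Since $\ell$ is $\lambda$-strongly proper composite, write $\ell(y,\hat{y}) = c(y,\psi^{-1}(\hat{y}))$ for a $\lambda$-strongly proper loss $c:\{\pm1\}\times[0,1]\>\R_+^*$ and a strictly increasing link $\psi:[0,1]\>\hat{\Y}$. Given $f:\X\>\hat{\Y}$, I would set $\hat{\eta} = \psi^{-1}\circ f:\X\>[0,1]$. Because $\psi^{-1}$ is strictly increasing, $\sign(f(x)-f(x')) = \sign(\hat{\eta}(x)-\hat{\eta}(x'))$ and $f(x)=f(x')\iff\hat{\eta}(x)=\hat{\eta}(x')$, so the indicator terms defining the ranking error agree pointwise; hence $\er_D^\rank[f] = \er_D^\rank[\hat{\eta}]$ and $\reg_D^\rank[f] = \reg_D^\rank[\hat{\eta}]$. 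Applying \Cor{cor:regret-bound-plugin} to $\hat{\eta}$ then gives $\reg_D^\rank[f] \leq \frac{1}{p(1-p)}\,\E_X\big[|\hat{\eta}(X)-\eta(X)|\big]$.

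Next I would convert the $\ell$-regret into an $L^2$ control on $\hat{\eta}-\eta$. Since $H_\ell = H_c$ and $L_\ell(\eta,\hat{y}) = L_c(\eta,\psi^{-1}(\hat{y}))$, the conditional $\ell$-regret of $f$ at $x$ equals $R_c(\eta(x),\hat{\eta}(x)) = L_c(\eta(x),\hat{\eta}(x)) - H_c(\eta(x))$, which by $\lambda$-strong properness of $c$ is at least $\frac{\lambda}{2}(\eta(x)-\hat{\eta}(x))^2$. Integrating over $X$ and using $\reg_D^\ell[f] = \E_X[R_\ell(\eta(X),f(X))]$ yields $\reg_D^\ell[f] \geq \frac{\lambda}{2}\,\E_X\big[(\eta(X)-\hat{\eta}(X))^2\big]$.

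Finally, Jensen's inequality gives $\E_X[|\eta(X)-\hat{\eta}(X)|] \leq \sqrt{\E_X[(\eta(X)-\hat{\eta}(X))^2]} \leq \sqrt{(2/\lambda)\,\reg_D^\ell[f]}$, and chaining this with the plug-in bound above produces exactly $\reg_D^\rank[f] \leq \frac{\sqrt{2}}{p(1-p)\sqrt{\lambda}}\sqrt{\reg_D^\ell[f]}$.

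I do not expect a serious obstacle: the argument is short precisely because three facts slot together — (i) rankings are invariant under the strictly increasing link, (ii) \Cor{cor:regret-bound-plugin} bounds the ranking regret by $\E_X|\hat{\eta}-\eta|$, and (iii) strong properness bounds $\E_X(\hat{\eta}-\eta)^2$ by the $\ell$-regret — with only a Jensen step in between. The one point needing minor care is that $\psi:[0,1]\>\hat{\Y}$ is a bijection, so that ranging $f$ over all maps $\X\>\hat{\Y}$ corresponds to ranging $\hat{\eta}$ over all maps $\X\>[0,1]$, and the identity $H_\ell = H_c$ (hence $\er_D^{\ell,*} = \E_X[H_c(\eta(X))]$) is legitimate; this is exactly what the definition of a proper composite loss supplies.
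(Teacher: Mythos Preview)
Your proposal is correct and matches the paper's own proof essentially step for step: decompose via the link to get $\hat\eta=\psi^{-1}\circ f$, use invariance of the ranking regret under the strictly increasing link, apply \Cor{cor:regret-bound-plugin}, then combine Jensen's inequality with the $\lambda$-strong properness bound $R_c(\eta,\hat\eta)\geq\frac{\lambda}{2}(\eta-\hat\eta)^2$ to finish. The only cosmetic difference is that the paper applies Jensen before invoking strong properness inside the square root, whereas you first integrate the strong properness inequality and then apply Jensen; the two orderings are equivalent.
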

\begin{proof}
Let $c:\{\pm1\}\times[0,1]\>\R_+^*$ be a $\lambda$-strongly proper loss and $\psi:[0,1]\>\hat{\Y}$ be a (strictly increasing) link function such that $\ell(y,\hat{y}) = c(y,\psi^{-1}(\hat{y}))$ for all $y\in\{\pm1\},\hat{y}\in\hat{\Y}$. Let $f:\X\>\hat{\Y}$.
Then we have, 
\begin{eqnarray*}
\reg_D^\rank[f]
	& = &
	\reg_D^\rank[ \psi^{-1}\circ f ]
	\,, ~~~\mbox{since $\psi$ is strictly increasing}
\\
	& \leq &
	\frac{1}{p(1-p)} \E_X\big[ \big| \psi^{-1}(f(X)) - \eta(X) \big| \big]
	\,, ~~~\mbox{by \Cor{cor:regret-bound-plugin}}
\\
	& = &
	\frac{1}{p(1-p)} \sqrt{ \Big( \E_X\big[ \big| \psi^{-1}(f(X)) - \eta(X) \big| \big] \Big)^2 }
\\
	& \leq &
	\frac{1}{p(1-p)} \sqrt{\E_X\Big[ \big( \psi^{-1}(f(X)) - \eta(X) \big)^2 \Big]}
	\,,
\\[2pt]
	& &
	\hspace{3cm} ~~~\mbox{by convexity of $\phi(u)=u^2$ and Jensen's inequality}
\\[2pt]
	& \leq &
	\frac{1}{p(1-p)} \sqrt{ \frac{2}{\lambda}\,\E_X\big[ R_c(\eta(X),\psi^{-1}(f(X))) \big]}
	\,, ~~~\mbox{since $c$ is $\lambda$-strongly proper}
\\[2pt]
	& = &
	\frac{1}{p(1-p)} \sqrt{ \frac{2}{\lambda}\,\E_X\big[ R_\ell(\eta(X),(f(X)) \big]}
\\
	& = &
	\frac{\sqrt{2}}{p(1-p)\sqrt{\lambda}} \sqrt{\reg_D^\ell[f]}	
	\,.
\end{eqnarray*}
\end{proof}

\Thm{thm:regret-bound-proper} shows that for any strongly proper composite loss $\ell:\{\pm1\}\times\hat{\Y}\>\R_+^*$, 
a function $f:\X\>\hat{\Y}$ with low $\ell$-regret will also have low ranking regret. Below we give several examples of such strongly proper (composite) loss functions; properties of some of these losses are summarized in \Tab{tab:loss-functions}. 

\subsection{Examples}
\label{subsec:examples}

\begin{example}[Exponential loss]
The exponential loss $\ell_{\exp}:\{\pm1\}\times\R^*\>\R_+^*$ defined as 
\[
\ell_{\exp}(y,\hat{y}) 
	~ = ~ 
	e^{-y\hat{y}}
\]
is a proper composite loss with associated proper loss $c_{\exp}:\{\pm1\}\times[0,1]\>\R_+^*$ and link function $\psi_{\exp}:[0,1]\>\R^*$ given by
\[
c_{\exp}(y,\hat{\eta}) 
	~ = ~
	\left( \frac{1-\hat{\eta}}{\hat{\eta}}\right )^{y/2}
	\,; ~~~~
\psi_{\exp}(\hat{\eta})
	~ = ~
	\frac{1}{2} \ln\left( \frac{\hat{\eta}}{1-\hat{\eta}} \right)
	\,.
\]
It is easily verified that $c_{\exp}$ is regular. Moreover, it can be seen that 
\[
H_{\exp}(\eta) 
	~ = ~ 
	2 \sqrt{\eta(1-\eta)}
	\,,
\]
with
\[
- H''_{\exp}(\eta) 
	~ = ~
	\frac{1}{2(\eta(1-\eta))^{3/2}}
	~ \geq ~
	4 ~~~~\forall\eta\in[0,1] 
	\,.
\]
Thus $H_{\exp}$ is 4-strongly concave, and so by \Thm{thm:strongly-proper}, we have $\ell_{\exp}$ is 4-strongly proper composite. Therefore applying \Thm{thm:regret-bound-proper} we have for any $f:\X\>\R^*$,
\[
\reg_D^\rank[f] 
	~ \leq ~
	\frac{1}{\sqrt{2}\,p(1-p)} \sqrt{\reg_D^{\exp}[f]}
	\,.
\]
\end{example}

\begin{example}[Logistic loss]
The logistic loss $\ell_{\exp}:\{\pm1\}\times\R^*\>\R_+^*$ defined as 
\[
\ell_{\log}(y,\hat{y}) 
	~ = ~ 
	\ln(1+e^{-y\hat{y}})
\]
is a proper composite loss with associated proper loss $c_{\log}:\{\pm1\}\times[0,1]\>\R_+^*$ and link function $\psi_{\log}:[0,1]\>\R^*$ given by
\[
c_{\log}(1,\hat{\eta}) 
	~ = ~
	-\ln \hat{\eta}
	\,; ~~~~
c_{\log}(-1,\hat{\eta}) 
	~ = ~
	-\ln(1-\hat{\eta})
	\,; ~~~~
\psi_{\log}(\hat{\eta})
	~ = ~
	\ln\left( \frac{\hat{\eta}}{1-\hat{\eta}} \right)
	\,.
\]
Again, it is easily verified that $c_{\log}$ is regular. Moreover, it can be seen that 
\[
H_{\log}(\eta) 
	~ = ~ 
	-\eta \ln\eta - (1-\eta) \ln(1-\eta)
	\,,
\]
with
\[
- H''_{\log}(\eta) 
	~ = ~
	\frac{1}{\eta(1-\eta)}
	~ \geq ~
	4 ~~~~\forall\eta\in[0,1] 
	\,.
\]
Thus $H_{\log}$ is 4-strongly concave, and so by \Thm{thm:strongly-proper}, we have $\ell_{\log}$ is 4-strongly proper composite. Therefore applying \Thm{thm:regret-bound-proper} we have for any $f:\X\>\R^*$,
\[
\reg_D^\rank[f] 
	~ \leq ~
	\frac{1}{\sqrt{2}\,p(1-p)} \sqrt{\reg_D^{\log}[f]}
	\,.
\]
\end{example}

\begin{example}[Squared and squared hinge losses]
The (binary) squared loss $(1-y\hat{y})^2$ and squared hinge loss $((1-y\hat{y})_+)^2$ (where $u_+ = \max(u,0)$) are generally defined for $\hat{y}\in\R$. To obtain class probability estimates from a predicted value $\hat{y}\in\R$, one then truncates $\hat{y}$ to $[-1,1]$, and uses $\hat{\eta} = \frac{\hat{y}+1}{2}$ \cite{Zhang04}. To obtain a proper loss, we can take $\hat{\Y}=[-1,1]$; in this range, both losses coincide, and we can define $\ell_\sq:\{\pm1\}\times[-1,1]\>[0,4]$ as
\[
\ell_\sq(y,\hat{y}) 
	~ = ~ 
	(1-y\hat{y})^2
	\,.
\]
This is a proper composite loss with associated proper loss $c_\sq:\{\pm1\}\times[-1,1]\>[0,4]$ and link function $\psi_\sq:[0,1]\>[-1,1]$ given by
\[
c_\sq(1,\hat{\eta}) 
	~ = ~
	4 (1-\hat{\eta})^2
	\,; ~~~~
c_\sq(-1,\hat{\eta}) 
	~ = ~
	4 \hat{\eta}^2
	\,; ~~~~
\psi_\sq(\hat{\eta})
	~ = ~
	2\hat{\eta} - 1
	\,.
\]
It can be seen that 
\[
L_{\sq}(\eta,\hat{\eta}) 
	~ = ~ 
	4\eta (1-\hat{\eta})^2 + 4(1-\eta) \hat{\eta}^2
\]
and 
\[
H_\sq(\eta) 
	~ = ~ 
	4\eta(1-\eta)
	\,,
\]
so that
\[
L_{\sq}(\eta,\hat{\eta}) - H_\sq(\eta)
	~ = ~
	4(\eta - \hat{\eta})^2
	\,.
\]
Thus $\ell_\sq$ is 8-strongly proper composite, and so applying \Thm{thm:regret-bound-proper} we have for any $f:\X\>[-1,1]$,
\[
\reg_D^\rank[f] 
	~ \leq ~
	\frac{1}{2\,p(1-p)} \sqrt{\reg_D^\sq[f]}
	\,.
\]
Note that, if a function $f:\X\>\R$ is learned, then our bound in terms of $\ell_\sq$-regret applies to the ranking regret of an appropriately transformed function $\bar{f}:\X\>[-1,1]$, such as that obtained by truncating values $f(x)\notin[-1,1]$ to the appropriate endpoint $-1$ or $1$:
\[
\bar{f}(x) ~ = ~
	\left\{ \begin{array}{cl}
		-1 & ~~\mbox{if $f(x)<-1$} \\
		f(x) & ~~\mbox{if $f(x)\in[-1,1]$} \\
		1 & ~~\mbox{if $f(x)>1$.} 
	\end{array} \right.
\]  
\end{example}


\begin{table}[t]
\caption{Examples of strongly proper composite losses $\ell:\{\pm1\}\times\hat{\Y}\>\R_+^*$ satisfying the conditions of \Thm{thm:regret-bound-proper}, together with prediction space $\hat{\Y}$, proper loss $c:\{\pm1\}\times[0,1]\>\R_+^*$, link function $\psi:[0,1]\>\hat{\Y}$, and strong properness parameter $\lambda$.}
\label{tab:loss-functions}
\begin{center}
\vspace{-12pt}
\begin{small}
\begin{tabular}{|@{~}c@{~}||@{~}c@{~}|@{~}c@{~}|@{~}c@{~}|@{~}c@{~}|@{~}c@{~}|c@{~}|}
\hline
\textbf{Loss} & $\hat{\Y}$ &
    $\ell(y,\hat{y})$ & 
    \multicolumn{2}{c|@{~}}{$c(y,\hat{\eta})$~~~~} & 
    $\psi(\hat{\eta})$ & 
    $\lambda$ \rule{0pt}{13pt} \\[-7pt]
& & & \multicolumn{2}{@{}c|@{~}}{\rule{5.2cm}{0.5pt}} & & \\[-3pt]
& & & $y=1$ & $y=-1$ & & \\[2pt]
\hline
\hline
Exponential & $\R^*$ &
    $e^{-y\hat{y}} $ & 
    $\sqrt{\frac{1-\hat{\eta}}{\hat{\eta}}}$ & 
    $\sqrt{\frac{\hat{\eta}}{1-\hat{\eta}}}$ &
    $\frac{1}{2} \ln\big( \frac{\hat{\eta}}{1-\hat{\eta}} \big)$ & 
    $4$ \rule{0pt}{15pt} \\[6pt]
\hline
Logistic & $\R^*$ & 
    $\ln(1+e^{-y\hat{y}})$ & 
    $-\ln \hat{\eta}$ & 
    $-\ln(1-\hat{\eta})$ &
    $\ln\big( \frac{\hat{\eta}}{1-\hat{\eta}} \big)$ & 
    $4$ \rule{0pt}{13pt} \\[2pt]
\hline
Squared & $[-1,1]$ &
    $(1-y\hat{y})^2$ & 
    $4(1-\hat{\eta})^2$ & 
    $4{\hat{\eta}}^2$ & 
    $2\hat{\eta}-1$ & 
    $8$ \rule{0pt}{13pt} \\[2pt]
\hline
Spherical & $[0,1]$ &
   $c(y,\hat{y})$ & 
    $1 - \frac{\hat{\eta}}{\sqrt{\hat{\eta}^2 + (1-\hat{\eta})^2}}$ &
    $1 - \frac{1-\hat{\eta}}{\sqrt{\hat{\eta}^2 + (1-\hat{\eta})^2}}$ &
    $\hat{\eta}$ & 
    $1$ \rule{0pt}{15pt} \\[6pt]
\hline
\begin{minipage}{0.9in}\begin{center} Canonical `exponential' \end{center}\end{minipage} & $\R^*$ &
    $\sqrt{1+\big( \frac{\hat{y}}{2} \big)^2} - \frac{y\hat{y}}{2}$ & 
    $\sqrt{\frac{1-\hat{\eta}}{\hat{\eta}}}$ & 
    $\sqrt{\frac{\hat{\eta}}{1-\hat{\eta}}}$ &
    $\frac{2\hat{\eta}-1}{\sqrt{\hat{\eta}(1-\hat{\eta})}}$ &
    $4$ \rule{0pt}{18pt} \\[6pt]
\hline
\begin{minipage}{0.9in}\begin{center} Canonical squared \end{center}\end{minipage} & $[-1,1]$ &
    $\frac{1}{4} (1-y\hat{y})^2$ & 
    $(1-\hat{\eta})^2$ & 
    ${\hat{\eta}}^2$ & 
    $2\hat{\eta}-1$ & 
    $2$ \rule{0pt}{18pt} \\[6pt]
\hline
\begin{minipage}{0.9in}\begin{center} Canonical spherical \end{center}\end{minipage} & $[-1,1]$ &
    $1 - \frac{1}{2}\big( \sqrt{2-\hat{y}^2} + y\hat{y} \big)$ & 
    $1 - \frac{\hat{\eta}}{\sqrt{\hat{\eta}^2 + (1-\hat{\eta})^2}}$ &
    $1 - \frac{1-\hat{\eta}}{\sqrt{\hat{\eta}^2 + (1-\hat{\eta})^2}}$ &
    $\frac{2\hat{\eta}-1}{\sqrt{\hat{\eta}^2 + (1-\hat{\eta})^2}}$ & 
    $1$ \rule{0pt}{18pt} \\[8pt]
\hline
\end{tabular}
\end{small}
\end{center}
\end{table}

\subsection{Constructing Strongly Proper Losses}
\label{subsec:constructing-strongly-proper}

In general, given any concave function $H:[0,1]\>\R_+$, one can construct a proper loss $c:\{\pm1\}\times[0,1]\>\R_+^*$ with $H_c = H$ as follows:
\begin{eqnarray}
c(1,\hat{\eta}) 
	& = &
	H(\hat{\eta}) + (1-\hat{\eta}) H'(\hat{\eta})
\label{eqn:H-c1}
\\
c(-1,\hat{\eta}) 
	& = &
	H(\hat{\eta}) -\hat{\eta} H'(\hat{\eta})
	\,,
\label{eqn:H-c2}
\end{eqnarray}
where $H'(\hat{\eta})$ denotes any superderivative of $H$ at $\hat{\eta}$. 
It can be verified that this gives $L_c(\eta,\hat{\eta}) = H(\hat{\eta}) + (\eta-\hat{\eta}) H'(\hat{\eta})$ for all $\eta,\hat{\eta}\in[0,1]$, and therefore $H_c(\eta) = H(\eta)$ for all $\eta\in[0,1]$. Moreover, if $H$ is such that $H(\hat{\eta}) + (1-\hat{\eta}) H'(\hat{\eta}) \in\R_+ ~\forall\hat{\eta}\in(0,1]$ and $H(\hat{\eta}) -\hat{\eta} H'(\hat{\eta}) \in\R_+ ~\forall\hat{\eta}\in[0,1)$,
then the loss $c$ constructed above is also regular.
Thus, starting with any $\lambda$-strongly concave function $H:[0,1]\>\R_+$ satisfying these regularity conditions, any proper composite loss $\ell$ formed from the loss function $c$ constructed according to \Eqs{eqn:H-c1}{eqn:H-c2} (and any link function $\psi$) is $\lambda$-strongly proper composite.

\begin{example}[Spherical loss]
Consider starting with the function $H_\spher:[0,1]\>\R$ defined as
\[
H_\spher(\eta) 
	~ = ~
	1 - \sqrt{\eta^2 + (1-\eta)^2}
	\,.
\]
Then 
\[
H'_\spher(\eta) 
	~ = ~
	\frac{-(2\eta-1)}{\sqrt{\eta^2 + (1-\eta)^2}}
\]
and
\[
-H''_\spher(\eta)
	~ = ~
	\frac{1}{(\eta^2 + (1-\eta)^2)^{3/2}}
	~ \geq ~
	1 ~~~~\forall \eta\in[0,1] 
	\,,
\]
and therefore $H_\spher$ is 1-strongly concave. Moreover, since $H_\spher$ and $H'_\spher$ are both bounded, the conditions for regularity are also satisfied. Thus we can use \Eqs{eqn:H-c1}{eqn:H-c2} to construct a 1-strongly proper loss $c_\spher:\{\pm1\}\times[0,1]\>\R$ as follows:
\[
c_\spher(1,\hat{\eta}) 
	~ = ~
	H_\spher(\hat{\eta}) + (1-\hat{\eta}) H'_\spher(\hat{\eta})
	~ = ~
	1 - \frac{\hat{\eta}}{\sqrt{\hat{\eta}^2 + (1-\hat{\eta})^2}}
\]
\[
c_\spher(-1,\hat{\eta}) 
	~ = ~
	H_\spher(\hat{\eta}) - \hat{\eta} H'_\spher(\hat{\eta})
	~ = ~
	1 - \frac{1-\hat{\eta}}{\sqrt{\hat{\eta}^2 + (1-\hat{\eta})^2}}
	\,.
\]
Therefore by \Thm{thm:regret-bound-proper}, we have for any $f:\X\>[0,1]$,
\[
\reg_D^\rank[f] 
	~ \leq ~
	\frac{\sqrt{2}}{p(1-p)} \sqrt{\reg_D^{\spher}[f]}
	\,.
\]
The loss $c_\spher$ above corresponds to the \emph{spherical scoring rule} described in \cite{GneitingRa07}.
\end{example}

We also note that, for every strictly proper loss $c:\{\pm1\}\times[0,1]\>\R_+^*$, there is an associated  `canonical' link function $\psi:[0,1]\>\hat{\Y}$ defined as 
\begin{equation}
\psi(\hat{\eta}) 
	~ = ~
	c(-1,\hat{\eta}) - c(1,\hat{\eta})
	\,,
\end{equation}
where $\hat{\Y} = \{\psi(\hat{\eta}): \hat{\eta} \in[0,1]\}$. We refer to composite losses comprised of such a strictly proper loss $c$ with the corresponding canonical link $\psi$ as \emph{canonical} proper composite losses. Clearly, multiplying $c$ by a factor $\alpha>0$ results in the corresponding canonical link $\psi$ also being multiplied by $\alpha$; adding a constant (or a function $\theta(y,\hat{\eta}) = \theta(\hat{\eta})$ that is independent of $y$) to $c$ has no effect on $\psi$. Conversely, given any $\hat{\Y}\subseteq\R^*$ and any (strictly increasing) link function $\psi:[0,1]\>\hat{\Y}$, there is a unique strictly proper loss $c:\{\pm1\}\times[0,1]\>\R_+^*$ (up to addition of constants or functions of the form $\theta(y,\hat{\eta}) = \theta(\hat{\eta})$) for which $\psi$ is canonical; this is obtained using \Eqs{eqn:H-c1}{eqn:H-c2} with $H$ satisfying $H'(\hat{\eta}) = -\psi(\hat{\eta})$ (with possible addition of a term $\theta(\hat{\eta})$ to both $c(1,\hat{\eta})$ and $c(-1,\hat{\eta})$ thus constructed).
Canonical proper composite losses $\ell(y,\hat{y})$ have some desirable properties, including for example convexity in their second argument $\hat{y}$ for each $y\in\{\pm1\}$; we refer the reader to \cite{Buja+05,ReidWi10} for further discussion of such properties.

We note that the logistic loss in Example 2 is a canonical proper composite loss. 
On the other hand, as noted in \cite{Buja+05}, the link $\psi_{\exp}$ associated with the exponential loss in Example 1 is not the canonical link for the proper loss $c_{\exp}$ (see Example 5).
The squared loss in Example 3 is almost canonical, modulo a scaling factor; one needs to scale either the link function or the loss appropriately (Example 6). 

\begin{example}[Canonical proper composite loss associated with $c_{\exp}$] 
Let $c_{\exp}:\{\pm1\}\times[0,1]\>\R_+^*$ be as in Example 1. The corresponding canonical link $\psi_{{\exp},\can}:[0,1]\>\R^*$ is given by 
\[
\psi_{{\exp},\can}(\hat{\eta}) 
	~ = ~
	\sqrt{\frac{\hat{\eta}}{1-\hat{\eta}}} - \sqrt{\frac{1-\hat{\eta}}{\hat{\eta}}}
	~ = ~
	\frac{2\hat{\eta}-1}{\sqrt{\hat{\eta}(1-\hat{\eta})}}
	\,.
\]
With a little algebra, it can be seen that the resulting canonical proper composite loss $\ell_{{\exp},\can}:\{\pm1\}\times\R^*\>\R_+^*$ is given by
\[
\ell_{{\exp},\can}(y,\hat{y})
	~ = ~
	\sqrt{1+\Big( \frac{\hat{y}}{2} \Big)^2} - \frac{y\hat{y}}{2}
	\,.
\]
Since we saw $c_{\exp}$ is 4-strongly proper, we have $\ell_{{\exp},\can}$ is 4-strongly proper composite, and therefore we have from \Thm{thm:regret-bound-proper} that for any $f:\X\>\R^*$,
\[
\reg_D^\rank[f] 
	~ \leq ~
	\frac{1}{\sqrt{2}\,p(1-p)} \sqrt{\reg_D^{{\exp},\can}[f]}
	\,.
\]
\end{example}

\begin{example}[Canonical squared loss]
For $c_\sq:\{\pm1\}\times[0,1]\>[0,4]$ defined as in Example 3, the canonical link $\psi_{\sq,\can}:[0,1]\>\hat{\Y}$ is given by 
\[
\psi_{\sq,\can}(\hat{\eta})
	~ = ~ 
	4  \hat{\eta}^2 - 4 (1-\hat{\eta})^2
	~ = ~
	4(2\hat{\eta}-1)
	\,,
\]
with $\hat{\Y} = [-4,4]$, and the resulting canonical squared loss $\ell_{\sq,\can}:\{\pm1\}\times[-4,4]\>[0,4]$ is given by
\[
\ell_{\sq,\can}(y,\hat{y}) 
	~ = ~ 
	\Big( 1-\frac{y\hat{y}}{4} \Big)^2
	\,.
\]
Since we saw $c_\sq$ is 4-strongly proper, we have $\ell_{\sq,\can}$ is 4-strongly proper composite, giving for any $f:\X\>[-4,4]$,
\[
\reg_D^\rank[f] 
	~ \leq ~
	\frac{1}{2\,p(1-p)} \sqrt{\reg_D^{\sq,\can}[f]}
	\,.
\] 
For practical purposes, this is equivalent to using the loss $\ell_\sq:\{\pm1\}\times[-1,1]\>[0,4]$ defined in Example 3.
Alternatively, we can start with a scaled version of the squared proper loss $c_{\sq'}:\{\pm1\}\times[0,1]\>[0,1]$ defined as 
\[
c_{\sq'}(1,\hat{\eta}) 
	~ = ~
	(1-\hat{\eta})^2
	\,; ~~~~
c_{\sq'}(-1,\hat{\eta}) 
	~ = ~
	\hat{\eta}^2
	\,,
\]
for which the associated canonical link $\psi_{\sq',\can}:[0,1]\>\hat{\Y}$ is given by 
\[
\psi_{\sq',\can}(\hat{\eta})
	~ = ~ 
	\hat{\eta}^2 - (1-\hat{\eta})^2
	~ = ~
	2\hat{\eta}-1
	\,,
\]
with $\hat{\Y} = [-1,1]$, and the resulting canonical squared loss $\ell_{\sq',\can}:\{\pm1\}\times[-1,1]\>[0,1]$ is given by
\[
\ell_{\sq',\can}(y,\hat{y}) 
	~ = ~ 
	\frac{(1-y\hat{y})^2}{4}
	\,.
\]
Again, it can be verified that $c_{\sq'}$ is regular; in this case $H_{\sq'}(\eta) = \eta(1-\eta)$ which is 2-strongly concave, giving that $H_{\sq',\can}$ is 4-strongly proper composite. Therefore applying \Thm{thm:regret-bound-proper} we have for any $f:\X\>[-1,1]$,
\[
\reg_D^\rank[f] 
	~ \leq ~
	\frac{1}{p(1-p)} \sqrt{\reg_D^{\sq',\can}[f]}
	\,.
\]
Again, for practical purposes, this is equivalent to using the loss $\ell_\sq$ defined in Example 3.
\end{example}

\begin{example}[Canonical spherical loss]
For $c_\spher:\{\pm1\}\times[0,1]\>\R$ defined as in Example 4, the canonical link $\psi_{\spher,\can}:[0,1]\>\hat{\Y}$ is given by 
\[
\psi_{\spher,\can}(\hat{\eta})
	~ = ~ 
	\frac{2\hat{\eta}-1}{\sqrt{\hat{\eta}^2 + (1-\hat{\eta})^2}}
	\,,
\]
with $\hat{\Y} = [-1,1]$. The resulting canonical spherical loss $\ell_{\spher,\can}:\{\pm1\}\times[-1,1]\>\R$ is given by
\[
\ell_{\spher,\can}(y,\hat{y}) 
	~ = ~ 
	1 - \frac{1}{2}\Big( \sqrt{2-\hat{y}^2} + y\hat{y} \Big)
	\,.
\]
Since we saw $c_\spher$ is 1-strongly proper, we have $\ell_{\spher,\can}$ is 1-strongly proper composite, and therefore we have from \Thm{thm:regret-bound-proper} that for any $f:\X\>[-1,1]$,
\[
\reg_D^\rank[f] 
	~ \leq ~
	\frac{\sqrt{2}}{p(1-p)} \sqrt{\reg_D^{\spher,\can}[f]}
	\,.
\] 
\end{example}

\section{Tighter Bounds under Low-Noise Conditions}
\label{sec:tighter-bounds}

In essence, our results exploit the fact that for a $\lambda$-strongly proper composite loss $\ell$ formed from a $\lambda$-strongly proper loss $c$ and link function $\psi$, given any scoring function $f$, the $L_2(\mu)$ distance (where $\mu$ denotes the marginal density of $D$ on $\X$) between $\psi^{-1}(f(X))$ and $\eta(X)$ (and therefore the $L_1(\mu)$ distance between $\psi^{-1}(f(X))$ and $\eta(X)$, which gives an upper bound on the ranking risk of $f$) can be upper bounded precisely in terms of the $\ell$-regret of $f$. From this perspective, $\hat{\eta} = \psi^{-1}\circ f$ can be treated as a `plug-in' scoring function, which we analyzed via \Cor{cor:regret-bound-plugin}.

Recently, \cite{ClemenconRo11} showed that, under certain low-noise assumptions, one can obtain tighter bounds on the ranking risk of a plug-in scoring function $\hat{\eta}:\X\>[0,1]$ than that offered by \Cor{cor:regret-bound-plugin}. Specifically, \cite{ClemenconRo11} consider the following noise assumption for bipartite ranking (inspired by the noise condition studied in \cite{Tsybakov04} for binary classification):

\vspace{10pt}
\noindent
\textbf{Noise Assumption $\NA(\alpha)$ ($\alpha \in [0,1]$):} \emph{A distribution $D$ on $\X\times\{\pm1\}$ satisfies assumption $\NA(\alpha)$ if $\exists$ a constant $C > 0$ such that for all $x\in\X$ and $t\in[0,1]$,}
\[
\P_X\big( \big| \eta(X) - \eta(x) \big| \leq t \big)
	~ \leq ~
	C\cdot t^\alpha
	\,.
\]
Note that $\alpha=0$ imposes no restriction on $D$, while larger values of $\alpha$ impose greater restrictions. \cite{ClemenconRo11} showed the following result (adapted slightly to our setting, where the ranking risk is conditioned on $Y\neq Y'$):

\begin{theorem}[\cite{ClemenconRo11}]
\label{thm:regret-bound-plugin-low-noise}
Let $\alpha \in [0,1)$ and $q\in[1,\infty)$. Then $\exists$ a constant $C_{\alpha,q} > 0$ such that for any distribution $D$ on $\X\times\{\pm1\}$ satisfying noise assumption $\NA(\alpha)$ and any $\hat{\eta}:\X\>[0,1]$,
\[
\reg_D^\rank[ \, \hat{\eta} \, ]
	~ \leq ~
	\frac{C_{\alpha,q}}{p(1-p)} \Big( \E_X\big[ \big| \hat{\eta}(X) - \eta(X) \big|^q \big] \Big)^{\frac{1+\alpha}{q+\alpha}}
	\,.
\]
\end{theorem}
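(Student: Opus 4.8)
The plan is to apply \Thm{thm:clemencon+08} to the plug-in scoring function $f=\hat\eta$ and then run a threshold-splitting argument in the spirit of \cite{Tsybakov04}, using $\NA(\alpha)$ to control the contribution of pairs with small margin $|\eta(X)-\eta(X')|$. Write $\Delta(x)=|\hat\eta(x)-\eta(x)|$ and $M=\E_X\big[\Delta(X)^q\big]$. The first step is a pointwise observation: if $(x,x')$ is a pair with $\eta(x)\neq\eta(x')$ on which $\hat\eta$ either mis-ranks (i.e.\ $(\hat\eta(x)-\hat\eta(x'))(\eta(x)-\eta(x'))<0$) or ties (i.e.\ $\hat\eta(x)=\hat\eta(x')$), then, taking $\eta(x)>\eta(x')$ without loss of generality, we must have $\hat\eta(x)\leq\hat\eta(x')$, so decomposing $\eta(x)-\eta(x')=(\eta(x)-\hat\eta(x))+(\hat\eta(x)-\hat\eta(x'))+(\hat\eta(x')-\eta(x'))$ with a non-positive middle term gives
\[
|\eta(x)-\eta(x')| ~\leq~ \Delta(x)+\Delta(x')\,.
\]
Substituting this into \Thm{thm:clemencon+08} (the $\half$ weight on ties is bounded by $1$, and mis-ranking and tying are disjoint) yields
\[
\reg_D^\rank[\hat\eta] ~\leq~ \frac{1}{2p(1-p)}\,\E_{X,X'}\Big[\,|\eta(X)-\eta(X')|\cdot\1\big(\Delta(X)+\Delta(X')\geq|\eta(X)-\eta(X')|\big)\Big]\,.
\]

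Next, fix a threshold $\delta\in(0,1]$ and split the expectation according to whether $|\eta(X)-\eta(X')|\leq\delta$ or $>\delta$. On the first event I bound $|\eta(X)-\eta(X')|$ by $\delta$ and invoke $\NA(\alpha)$: conditioning on $X$ and integrating the pointwise bound $\P_{X'}\big(|\eta(X)-\eta(X')|\leq\delta\big)\leq C\delta^\alpha$ gives a contribution of at most $C\delta^{1+\alpha}$. On the second event the indicator forces $\Delta(X)+\Delta(X')>\delta$, and since there $|\eta(X)-\eta(X')|\leq\Delta(X)+\Delta(X')$, the contribution is at most $\E_{X,X'}\big[(\Delta(X)+\Delta(X'))\,\1(\Delta(X)+\Delta(X')>\delta)\big]$; using $s\,\1(s>\delta)\leq\delta^{1-q}s^q$ (valid for $q\geq1$) together with the convexity estimate $(a+b)^q\leq2^{q-1}(a^q+b^q)$, this is at most $2^qM\delta^{1-q}$. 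Hence for every $\delta\in(0,1]$,
\[
\reg_D^\rank[\hat\eta] ~\leq~ \frac{1}{2p(1-p)}\Big(C\delta^{1+\alpha}+2^qM\delta^{1-q}\Big)\,.
\]

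Finally I would optimize over $\delta$. When $q>1$ the two terms balance at $\delta^\star\asymp M^{1/(q+\alpha)}$, and substituting this value back (provided $\delta^\star\leq1$) makes each term a constant multiple of $M^{(1+\alpha)/(q+\alpha)}$, which gives the claimed inequality with a constant $C_{\alpha,q}$ depending only on $\alpha$, $q$ and the constant $C$ of $\NA(\alpha)$; when $M$ is large enough that $\delta^\star>1$ one instead uses the trivial bound $\reg_D^\rank[\hat\eta]\leq1$, which is absorbed into the constant because then $M^{(1+\alpha)/(q+\alpha)}$ is bounded away from $0$. The case $q=1$ needs no work, since then $(1+\alpha)/(q+\alpha)=1$ and the bound is just \Cor{cor:regret-bound-plugin}. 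The steps I expect to demand the most care are the pointwise ``wrong-side'' inequality with the tie term handled correctly, and the bookkeeping in the $\delta$-optimization so that a single constant $C_{\alpha,q}$ works uniformly over all distributions $D$ (satisfying $\NA(\alpha)$ with the given $C$) and all $\hat\eta$, in particular across the threshold between the ``optimized-$\delta$'' and ``trivial'' regimes; the $\NA(\alpha)$ integration and the elementary inequalities are routine.
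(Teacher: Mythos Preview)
The paper does not actually prove this theorem: it is stated as a result of \cite{ClemenconRo11} (with only a slight adaptation to the present conditioning on $Y\neq Y'$), and the paper merely \emph{uses} it as a black box in the proof of \Thm{thm:regret-bound-proper-low-noise}. So there is no in-paper proof to compare against.

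That said, your argument is the standard Tsybakov-style splitting that \cite{ClemenconRo11} use, and it is correct. A few remarks. First, your pointwise ``wrong-side'' inequality is exactly the one proved in \App{app:A} for \Cor{cor:regret-bound-plugin}, so you could simply cite that step. Second, when you say the final constant $C_{\alpha,q}$ depends on the constant $C$ from $\NA(\alpha)$, you are right; the theorem as stated in the paper is slightly loose on this point, since the noise assumption is defined with an existential $C$ that may vary with $D$. In practice one fixes the pair $(\alpha,C)$ and the constant depends on both, which is consistent with \cite{ClemenconRo11}. Third, your handling of the boundary cases ($q=1$ via \Cor{cor:regret-bound-plugin}; $\delta^\star>1$ via the trivial bound $\reg_D^\rank\leq 1$) is fine and is exactly the kind of bookkeeping needed to get a single constant that works uniformly.
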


This allows us to obtain the following tighter version of our regret bound in terms of strongly proper losses under the same noise assumption:

\begin{theorem}
\label{thm:regret-bound-proper-low-noise}
Let $\hat{\Y}\subseteq\R^*$ and $\lambda>0$, and let $\alpha \in [0,1)$. Let $\ell:\{\pm1\}\times\hat{\Y}\>\R_+^*$ be a $\lambda$-strongly proper composite loss. Then $\exists$ a constant $C_\alpha > 0$ such that for any distribution $D$ on $\X\times\{\pm1\}$ satisfying noise assumption $\NA(\alpha)$ and any $f:\X\>\hat{\Y}$,
\[
\reg_D^\rank[f] 
	~ \leq ~
	\frac{C_\alpha}{p(1-p)} \left( \frac{2}{\lambda} \right)^{\frac{1+\alpha}{2+\alpha}}
		\Big( \reg_D^\ell[f] \Big)^{\frac{1+\alpha}{2+\alpha}}
	\,.
\]
\end{theorem}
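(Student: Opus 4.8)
The plan is to run exactly the argument used for \Thm{thm:regret-bound-proper}, but to replace the invocation of \Cor{cor:regret-bound-plugin} by \Thm{thm:regret-bound-plugin-low-noise} with the choice $q = 2$. The point is that $q=2$ is the exponent for which the plug-in bound of \cite{ClemenconRo11} meshes with the control that strong properness gives: a $\lambda$-strongly proper loss bounds precisely the \emph{squared} pointwise deviation between the (link-transformed) prediction and $\eta$.

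First I would write $\ell(y,\hat y) = c(y,\psi^{-1}(\hat y))$ for a $\lambda$-strongly proper loss $c:\{\pm1\}\times[0,1]\>\R_+^*$ and a strictly increasing link $\psi:[0,1]\>\hat\Y$, and set $\hat\eta = \psi^{-1}\circ f : \X \> [0,1]$. Since $\psi^{-1}$ is strictly increasing, $\sign(f(X)-f(X'))$ and $\sign(\hat\eta(X)-\hat\eta(X'))$ agree and the tie events coincide, so $\reg_D^\rank[f] = \reg_D^\rank[\hat\eta]$, i.e.\ $\hat\eta$ may be treated as a plug-in scoring function. Applying \Thm{thm:regret-bound-plugin-low-noise} with $q=2$ to $\hat\eta$, under $\NA(\alpha)$ there is a constant $C_{\alpha,2}>0$ with
\[
\reg_D^\rank[\hat\eta] \;\le\; \frac{C_{\alpha,2}}{p(1-p)}\,\Big( \E_X\big[\,|\hat\eta(X) - \eta(X)|^2\,\big] \Big)^{\frac{1+\alpha}{2+\alpha}} ,
\]
using that with $q=2$ the exponent $\tfrac{1+\alpha}{q+\alpha}$ equals the stated $\tfrac{1+\alpha}{2+\alpha}$.

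Next, exactly as in the proof of \Thm{thm:regret-bound-proper}, strong properness of $c$ gives the pointwise inequality $(\psi^{-1}(f(X))-\eta(X))^2 \le \tfrac{2}{\lambda}\,R_c(\eta(X),\psi^{-1}(f(X))) = \tfrac{2}{\lambda}\,R_\ell(\eta(X),f(X))$, and taking $\E_X$ yields $\E_X[\,|\hat\eta(X)-\eta(X)|^2\,] \le \tfrac{2}{\lambda}\,\reg_D^\ell[f]$. Since the exponent $\tfrac{1+\alpha}{2+\alpha}$ is positive, $t \mapsto t^{\frac{1+\alpha}{2+\alpha}}$ is nondecreasing on $[0,\infty)$, so substituting this bound and pulling the factor $\big(\tfrac{2}{\lambda}\big)^{\frac{1+\alpha}{2+\alpha}}$ outside gives the claim with $C_\alpha = C_{\alpha,2}$.

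There is essentially no obstacle here; the statement is a direct recombination of a result already quoted. The only things to get right are the bookkeeping of the exponent (which is the reason the theorem is phrased with the specific power $\tfrac{1+\alpha}{2+\alpha}$) and the conceptual observation — noted in the preamble to this section — that strong properness controls an $L_2(\mu)$ rather than an $L_1(\mu)$ distance, which is exactly why $q=2$ is the right instantiation of \Thm{thm:regret-bound-plugin-low-noise}. (Taking $\alpha = 0$ here recovers \Thm{thm:regret-bound-proper} up to the constant $C_0$, since then the exponent is $\tfrac12$.)
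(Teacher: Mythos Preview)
Your proposal is correct and follows essentially the same route as the paper: reduce to the plug-in estimator $\hat\eta=\psi^{-1}\circ f$ via monotonicity of the link, apply \Thm{thm:regret-bound-plugin-low-noise} with $q=2$, bound the pointwise squared deviation using $\lambda$-strong properness, and set $C_\alpha=C_{\alpha,2}$. The paper's proof is line-for-line the same argument.
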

\begin{proof}
Let $c:\{\pm1\}\times[0,1]\>\R_+^*$ be a $\lambda$-strongly proper loss and $\psi:[0,1]\>\hat{\Y}$ be a (strictly increasing) link function such that $\ell(y,\hat{y}) = c(y,\psi^{-1}(\hat{y}))$ for all $y\in\{\pm1\},\hat{y}\in\hat{\Y}$.
Let $D$ be a distribution on $\X\times\{\pm1\}$ satisfying noise assumption $\NA(\alpha)$ and let $f:\X\>\hat{\Y}$.
Then we have, 
\begin{eqnarray*}
\reg_D^\rank[f]
	& = &
	\reg_D^\rank[ \psi^{-1}\circ f ]
	\,, ~~~\mbox{since $\psi$ is strictly increasing}
\\
	& \leq &
	\frac{C_{\alpha,2}}{p(1-p)} \Big( \E_X\Big[ \big( \psi^{-1}(f(X)) - \eta(X) \big)^2 \Big] \Big)^{\frac{1+\alpha}{2+\alpha}}
	\,, 
\\[2pt]
	& &
	\hspace{3cm} ~~~\mbox{by \Thm{thm:regret-bound-plugin-low-noise}, taking $q=2$}
\\[2pt]
	& \leq &
	\frac{C_{\alpha,2}}{p(1-p)} \left( \frac{2}{\lambda}\,\E_X\big[ R_c(\eta(X),\psi^{-1}(f(X))) \big] \right)^{\frac{1+\alpha}{2+\alpha}}
	\,,
\\[2pt]
	& & 
	\hspace{3cm} ~~~\mbox{since $c$ is $\lambda$-strongly proper}
\\[2pt]
	& = &
	\frac{C_{\alpha,2}}{p(1-p)} \left( \frac{2}{\lambda}\,\E_X\big[ R_\ell(\eta(X),f(X)) \big] \right)^{\frac{1+\alpha}{2+\alpha}}
\\
	& = &
	\frac{C_{\alpha,2}}{p(1-p)} \left( \frac{2}{\lambda} \right)^{\frac{1+\alpha}{2+\alpha}}
		\Big( \reg_D^\ell[f] \Big)^{\frac{1+\alpha}{2+\alpha}}
	\,.
\end{eqnarray*}
The result follows by setting $C_\alpha = C_{\alpha,2}$.
\end{proof}

For $\alpha = 0$, as noted above, there is no restriction on $D$, and so the above result gives the same dependence on $\reg_D^\ell[f]$ as that obtained from \Thm{thm:regret-bound-proper}. On the other hand, 
as $\alpha$ approaches 1, the exponent of the $\reg_D^\ell[f]$ term in the above bound approaches $\frac{2}{3}$, which improves over the exponent of $\half$ in \Thm{thm:regret-bound-proper}.

\section{Conclusion and Open Questions}
\label{sec:concl}

We have obtained upper bounds on the bipartite ranking regret of a scoring function in terms of the (non-pairwise) regret associated with a broad class of proper (composite) losses that we have termed \emph{strongly proper} (composite) losses. This class includes several widely used losses such as exponential, logistic, squared and squared hinge losses as special cases. 

The definition and characterization of strongly proper losses may be of interest in its own right, and may find applications elsewhere. An open question concerns the necessity of the regularity condition in the characterization of strong properness of a proper loss in terms of strong concavity of the conditional Bayes risk (\Thm{thm:strongly-proper}). The characterization of strict properness of a proper loss in terms of strict concavity of the conditional Bayes risk (\Thm{thm:strictly-proper}) does not require such an assumption, and one wonders whether it may be possible to remove the regularity assumption in the case of strong properness as well.

Many of the strongly proper composite losses that we have considered, such as the exponential, logistic, squared and spherical losses, are margin-based losses, which means the bipartite ranking regret can also be upper bounded in terms of the regret associated with pairwise versions of these losses via the reduction to pairwise classification (\Sec{subsec:reduction-pairwise}). 
A natural question that arises is whether it is possible to characterize conditions on the distribution under which algorithms based on one of the two approaches (minimizing a pairwise form of the loss as in RankBoost/pairwise logistic regression, or minimizing the standard loss as in AdaBoost/standard logistic regression) lead to faster convergence than those based on the other. 
We hope the tools and results established here may help in studying such questions in the future.


\section*{Acknowledgments}

Thanks to Harish Guruprasad and Arun Rajkumar for helpful discussions. Thanks also to Yoonkyung Lee for inviting me to give a talk at the ASA Conference on Statistical Learning and Data Mining held in Ann Arbor, Michigan, in June 2012; part of this work was done while preparing for that talk. This research was supported in part by a Ramanujan Fellowship from the Department of Science and Technology, Government of India.


\begin{appendix}

\section{Proof of \Cor{cor:regret-bound-plugin}}
\label{app:A}

\begin{proof}
Let $\hat{\eta}:\X\>[0,1]$.
By \Thm{thm:clemencon+08}, we have
\begin{eqnarray*}
\reg_D^\rank[ \, \hat{\eta} \, ]
	& \leq &
	\frac{1}{2p(1-p)} \E_{X,X'}\Big[ 
		\big| \eta(X) - \eta(X') \big| \cdot \1\big( (\hat{\eta}(X)-\hat{\eta}(X'))(\eta(X)-\eta(X')) \leq 0 \big) 
	\Big]
	\,.
\end{eqnarray*}
The result follows by observing that for any $x,x'\in\X$, 
\[
(\hat{\eta}(x)-\hat{\eta}(x'))(\eta(x)-\eta(x')) \leq 0
	~ \implies ~
	|\eta(x) - \eta(x')| \leq |\hat{\eta}(x) - \eta(x)| + |\hat{\eta}(x') - \eta(x')|
	\,.
\]
To see this, not that the statement is trivially true if $\eta(x) = \eta(x')$.
If $\eta(x) > \eta(x')$, then we have 
\begin{eqnarray*}
(\hat{\eta}(x)-\hat{\eta}(x'))(\eta(x)-\eta(x')) \leq 0
	& \implies &
	\hat{\eta}(x) \leq \hat{\eta}(x') 
\\
	& \implies & 
	\eta(x) - \eta(x') \leq (\eta(x) - \hat{\eta}(x)) + (\hat{\eta}(x') - \eta(x')) 
\\
	& \implies & 
	\eta(x) - \eta(x') \leq |\eta(x) - \hat{\eta}(x)| + |\hat{\eta}(x') - \eta(x')| 
\\
	 & \implies &
	|\eta(x) - \eta(x')| \leq |\hat{\eta}(x) - \eta(x)| + |\hat{\eta}(x') - \eta(x')|
	\,.
\end{eqnarray*}
The case $\eta(x) < \eta(x')$ can be proved similarly. Thus we have
\begin{eqnarray*}
\reg_D^\rank[ \, \hat{\eta} \, ]
	& \leq &
	\frac{1}{2p(1-p)} \E_{X,X'}\Big[ 
		\big| \hat{\eta}(X) - \eta(X) \big| + \big| \hat{\eta}(X') - \eta(X') \big| 
	\Big]
\\
	& = &
	\frac{1}{p(1-p)} \E_{X}\Big[ 
		\big| \hat{\eta}(X) - \eta(X) \big|
	\Big]
	\,.
\end{eqnarray*}

\end{proof}

\end{appendix}



\end{document}